\documentclass{article} 
\usepackage{iclr2027_conference}

\usepackage{amsmath,amsfonts,bm}

\def\eqref#1{equation~\ref{#1}}

\def\1{\bm{1}}

\DeclareMathAlphabet{\mathsfit}{\encodingdefault}{\sfdefault}{m}{sl}
\SetMathAlphabet{\mathsfit}{bold}{\encodingdefault}{\sfdefault}{bx}{n}

\DeclareMathOperator{\sign}{sign}
\DeclareMathOperator{\Tr}{Tr}

\usepackage{hyperref}
\usepackage{times}
\usepackage{url}
\usepackage{graphicx}
\usepackage{algorithm}
\usepackage{algorithmic}
\usepackage{booktabs}
\usepackage{comment}
\usepackage{wrapfig}
\usepackage{multirow}
\usepackage{amsthm}

\theoremstyle{plain}
\newtheorem{theorem}{Theorem}
\newtheorem{proposition}{Proposition}
\newtheorem{lemma}{Lemma}
\newtheorem{corollary}{Corollary}
\theoremstyle{definition}
\newtheorem{definition}{Definition}

\theoremstyle{remark}
\newtheorem{remark}{Remark}

\title{Sparse-TD-JEPA: Improving Zero-Shot Reinforcement Learning With Sparse Latent Task Representations}

\author{Duaa Nakshbandi\\
University of California, Merced\\
\texttt{duaanakshbandi@ucmerced.edu} \\
\And
Zhiyu An\\
University of California, Merced\\
\texttt{zan7@ucmerced.edu} \\
\And
Wan Du\\
University of California, Merced\\
\texttt{wdu3@ucmerced.edu} \\
}

\iclrfinalcopy 
\begin{document}

\maketitle

\begin{abstract}
Successor-feature-based zero-shot reinforcement learning methods pre-train a task encoder and successor-feature predictor on offline, reward-free data, then recover a policy at test time by solving a closed-form linear regression over a small labeled dataset. 
We identify an underexplored limitation of this approach: when the task encoder produces dense representations, the inferred reward signal becomes diffuse across the state space, reducing the contrast between goal-relevant states and the background and providing a weaker signal for the policy to follow. 
We address this limitation by introducing an $\ell_1$ sparsity penalty that concentrates reward-predictive structure into a smaller effective subspace of the task encoder's latent space. 
The penalty is applied only during training and requires no architectural change or modification to the closed-form inference procedure. 
We instantiate this approach as \emph{Sparse-TD-JEPA}, a single-line of code addition based on Temporal-Difference Joint-Embedding Predictive Architecture (TD-JEPA) that significantly improved its performance in zero-shot reinforcement learning. 
We evaluate Sparse-TD-JEPA on 65 tasks across the DeepMind Control (DMC) Suite and OGBench, covering locomotion, navigation, trajectory stitching, puzzle solving, and robotic manipulation.
Sparse-TD-JEPA improves over TD-JEPA in all 13 domain-level comparisons. 
On DMC, it increases the average return from $661 \pm 6$ to $714 \pm 3$. 
On OGBench, it increases the average success rate from $38.0 \pm 0.8$ to $48.4 \pm 2.3$, with improvements of 13.4--18.0 percentage points on the evaluated antmaze navigation and stitching domains. 
Overall, Sparse-TD-JEPA achieves the highest mean performance among established zero-shot baselines in nine of the 13 domains. 
Analysis further shows that sparsity produces a sharper and more task-informative inferred reward landscape, offering a mechanistic explanation for the observed performance gains.
\end{abstract}

\section{Introduction}

Zero-shot reinforcement learning studies the problem of pre-training a single policy on offline, reward-free data such that it can solve any downstream task without further interaction with the environment. 
It is important because it scales the achievable tasks of a single model without requiring an equal amount of scaling to the training data or online experience, making RL more practical across the diverse and open-ended settings encountered in the real world.

A prominent line of work addresses this problem through successor features and task encoders, where a latent representation of states is learned such that any reward function can be expressed as a linear function of that representation \cite{barreto2017successor, touati2021learning}. At test time, a small set of labeled transitions suffices to recover a task vector via linear regression, which is then used to retrieve the corresponding pre-trained policy. 
Recent zero-shot RL methods have explored a diverse range of objectives for constructing task representations. 
Laplacian representations capture the geometry of the environment through eigenfunctions of its transition graph \cite{wu2018laplacian}, while ICVF learns representations through intention-conditioned value prediction \cite{ghosh2023reinforcement}. 
HILP instead organizes the latent space around temporal distances for goal reaching \cite{park2024foundation}. 
Self-predictive approaches such as BYOL and BYOL-$\gamma$ have also been adapted to the successor-feature setting by predicting future latent representations over one or multiple time scales \cite{grill2020bootstrap,lawson2025self}. 
More recent methods, including RLDP and TD-JEPA, improve multi-step latent-dynamics prediction through chained or temporal-difference objectives \cite{jajoo2026regularized,bagatella2025td}. 
Although these approaches differ in how they learn task-relevant structure, they all ultimately rely on the learned task representation to support accurate linear reward inference at test time. 

Despite this progress, a critical bottleneck has received comparatively little attention. When the task encoder spreads reward-predictive information diffusely across all latent dimensions, the predicted reward landscape is less concentrated near the goal. 
The resulting reward signal has a weaker gradient to follow, leading to degraded downstream performance. We find that this difference in landscape sharpness persists across a wide range of labeled-set sizes, suggesting it reflects a structural property of the encoder geometry.

We address these challenges through a sparsity penalty for the task encoding representation to concentrate its reward-predictive structure into a smaller effective subspace of its latent space. 
We instantiate this idea as \emph{Sparse-TD-JEPA}, a single-line code addition based on TD-JEPA framework that significantly improves performance on various navigation, locomotion, and puzzle tasks.
Empirically, this simple modification produces consistent gains across all 13 evaluated domain-level comparisons. 
On \textit{DeepMind-Control (DMC)}, Sparse TD-JEPA improves the average return from $661 \pm 6$ to $714 \pm 3$, a gain of 53 points. 
On \textit{OGBench}, it raises the average success rate from $38.0 \pm 0.8$ to $48.4 \pm 2.3$, an absolute improvement of 10.4 percentage points. 
Moreover, analysis experiment shows that the resulting encoder produces a sharper, more peaked predicted reward landscape, giving the downstream policy a clearer signal to follow. 

Our contributions are as follows:
\begin{itemize}
    \item We demonstrate theoretically and empirically that $\ell_1$ sparsity on the task encoder improves zero-shot reinforcement learning performance.
    \item Through controlled analysis experiments, we show that the sparsity-penalized task encoder produces a smoother and more informative reward landscape.
    \item We evaluate our method across standard zero-shot reinforcement learning benchmarks covering robotic locomotion, navigation, and manipulation tasks. We  demonstrate consistent gains over the base method and other established zero-shot baselines.    
\end{itemize}

\section{Related Work}
\label{sec:related-work}

\paragraph{Latent-predictive representation learning.}
The Joint-Embedding Predictive Architecture (JEPA) \cite{lecun2022path} is a representations learning architecture that splits an input into a context view and a target view, then trains a predictor to map the context encoding to the target encoding in a shared latent space.  This principle has been instantiated across modalities. Image-based Joint-Embedding Predictive Architecture (I-JEPA) \cite{assran2023self} applies it to static images. It uses an asymmetric context/target-encoder design to predict the representations of masked image regions from visible patches. Video-based Joint-Embedding Predictive Architecture (V-JEPA) \cite{bardes2024revisiting, assran2025v} extends this to video. It predicts masked spatio-temporal tokens from unmasked ones and supports downstream robotic planning without task-specific reward. Temporal-Difference Joint-Embedding Predictive Architecture (TD-JEPA) \cite{bagatella2025td} instantiates the same context-target prediction problem in reinforcement learning. A state encoder and a task encoder are trained via a temporal-difference latent-predictive loss over offline, off-policy transitions. The predictor is conditioned on the action and a policy latent.

\paragraph{Zero-shot reinforcement learning.}
Zero-shot Reinforcement Learning (RL) methods are methods that pre-train an agent once on offline, reward-free data, then at test time solve arbitrary downstream tasks without further interaction. Most zero-shot RL methods learn a \emph{task encoder} alongside \emph{successor features}. The task encoder maps states to a space in which any reward is linear. The \emph{successor features} predict how much of each task-encoder dimension a policy accumulates over time. Forward-Backward (FB) representations \cite{touati2021learning}, a zero-shot RL method, learns both jointly via a bilinear decomposition of successor measures. Other zero-shot reinforcement learning methods construct the task encoder through different objectives. Bootstrap Your Own Latent (BYOL) \cite{grill2020bootstrap} predicts an encoder's own future latents. The Laplacian representation \cite{wu2018laplacian} uses eigenfunctions of the environment's transition graph. The Intention-Conditioned Value Function (ICVF) \cite{ghosh2023reinforcement} method uses an implicit value-based objective. HILbert foundation Policy (HILP) \cite{park2024foundation} enforces distance-preservation for goal-reaching. Regularized Latent Dynamics Prediction  (RLDP) \cite{jajoo2026regularized} uses chained multi-step latent prediction. BYOL-$\gamma$ \cite{lawson2025self} extends BYOL to multi-step targets. TD-JEPA \cite{bagatella2025td} uses a temporal-difference latent-predictive objective as the core training signal for both encoders. It distills zero-shot policies end to end.

\paragraph{Sparse and structured representation learning.}
Sparsity-inducing penalties encourage weights or activations toward zero. We refer to \citet{hoefler2021sparsity} for a survey. A useful distinction is between \emph{model sparsity}, which permanently removes structural elements, and \emph{ephemeral sparsity}, which sparsifies activations per-example without altering the architecture. Dropout \cite{srivastava2014dropout} applies ephemeral sparsity to hidden units for regularization. $k$-sparse autoencoders \cite{makhzani2013k} retain only the top-$k$ activations per example for unsupervised feature learning. Closer to our motivation, \citet{liu2019utility} show that ephemeral sparsity in the last hidden layer of a control network improves value-function learning on classic control tasks. It reduces activation overlap between unrelated states. This mitigates interference during bootstrapping. It is evidence that sparsity can directly aid the kind of TD-based learning our method builds on.

\section{Background}
\label{sec:background}

\subsection{Problem Setup}

We consider a Markov Decision Process (MDP)
$\mathcal{M} = (\mathcal{S}, \mathcal{A}, P, R, \gamma)$, where $\mathcal{S}$ is the state space, $\mathcal{A}$ is the action
space,  $P(s' \mid s, a)$  is the  transition kernel, giving the probability measure over the next state $s'$ when action $a$ is taken in state $s$, $R(s, a)$ is the reward function, and  $\gamma \in [0,1]$ is the discount factor, giving down-weights rewards received further in the future~\citep{sutton1998reinforcement,touati2021learning}.
A \emph{policy} $\pi : \mathcal{S} \to \mathrm{Prob}(\mathcal{A})$ maps each
state to a probability distribution over actions \citep{bagatella2025td,touati2021learning}.

In \emph{zero-shot offline RL}~\citep{touati2021learning,park2025ogbench},
a single pre-trained policy must generalize to downstream reward without further environment interaction.
The agent pre-trains on a static, reward-free dataset $\mathcal{D} = \{(s, a, s')\}$ of transitions, where $s$ is the current state, $a$ is the action taken, and $s'$ is the resulting next state~\citep{bagatella2025td}.
At evaluation time it receives a small set of labeled transitions $\mathcal{D}_{\mathrm{rwd}} = \{(s, r)\}$, where $r \in \mathbf{R}$ is the scalar reward observed at state $s$~\citep{bagatella2025td}, and must produce a near-optimal policy
with no further updates to the pre-trained representation.

\subsection{Temporal Difference Joint-Embedding Predictive Architecture}
\label{sec:tdjepa}

TD-JEPA~\citep{bagatella2025td} trains a \emph{state encoder}
$\phi$ and a \emph{task encoder} $\psi$, together with two
predictors $T_\phi$ and $T_\psi$, via temporal-difference
latent-predictive losses on offline transitions
$(s,a,s') \in \mathcal{D}$.
The \emph{state encoder} $\phi : \mathcal{S} \to \mathbf{R}^{d_{\phi}}$
compresses each state to a $d_\phi$-dimensional latent vector.
The \emph{task encoder} $\psi : \mathcal{S} \to \mathbf{R}^{d_{\psi}}$ maps each state to a $d_\psi$-dimensional embedding used to define the space of linear rewards.
The predictor
$T_{\phi} : \mathbf{R}^{d_{\phi}} \times \mathcal{A} \times Z \to \mathbf{R}^{d_{\psi}}$
takes a state latent, an action, and a task vector and predicts in the
$d_\psi$-dimensional task-encoder space.
The predictor
$T_{\psi} : \mathbf{R}^{d_{\psi}} \times \mathcal{A} \times Z \to \mathbf{R}^{d_{\phi}}$
takes a task latent, an action, and a task vector and predicts in the
$d_\phi$-dimensional state-encoder space.

Rewards are assumed linear in the task encoding:
\begin{equation}
  r(s) \approx \psi(s)^{\top} z_r, \quad z_r \in \mathbf{R}^{d_\psi},
  \label{eq:reward_linear}
\end{equation}
where $z_r$ is a \emph{task vector} identifying a specific reward
function~\citep{barreto2017successor,bagatella2025td}. Task vectors
are constrained to a sphere of radius $\sqrt{d_\psi}$ via
\begin{equation}
  \mathrm{project\_z}(z) = \sqrt{d_\psi}\;\frac{z}{\|z\|_2}.
  \label{eq:projz}
\end{equation}

During pre-training, task vectors $z$ are sampled from a mixture of the uniform distribution on this sphere and projected task encodings of dataset states, and the actor is conditioned on $z$; each $z$ thus indexes a policy $\pi_z(\cdot \mid s)$ that is distilled end-to-end. Throughout our analysis we write $V(s) := \psi(s)^{\top} z_r$ for the \emph{inferred reward estimate} induced by a recovered task vector $z_r$; it is a point estimate of the immediate reward $r(s)$ in Eq.~\eqref{eq:reward_linear}.

\paragraph{Zero-shot inference.}
Given $\mathcal{D}_{\mathrm{rwd}} = \{(s, r)\}$, the task vector is
recovered by~\citep{bagatella2025td}
\begin{align}
    &z_r
  = \mathrm{project\_z}\!\left(
      (\Psi^{\top}\Psi)^{-1}\Psi^{\top} r
    \right)\\
  &\Psi = [\psi(s_1)\;\cdots\;\psi(s_N)]^{\top}.
  \label{eq:inference}
\end{align}

Here $N = |\mathcal{D}_{\mathrm{rwd}}|$ is the number of state-reward pairs inside buffer $\mathcal{D}_{\mathrm{rwd}}$, $\Psi \in \mathbf{R}^{N \times d_\psi}$ is the feature matrix
stacking the task encodings of all $N$ labeled states, and
$r \in \mathbf{R}^N$ collects their observed rewards.
The inner expression $(\Psi^{\top}\Psi)^{-1}\Psi^{\top} r$ is the
ordinary least-squares fit to $\Psi z \approx r$,
minimising $\sum_{(s,r)\in\mathcal{D}_{\mathrm{rwd}}}(r - \psi(s)^{\top} z)^2$
over $z \in \mathbf{R}^{d_\psi}$.
The result is then projected to the sphere via $\mathrm{project\_z}$ to
yield $z_r$.
No further updates to $\phi$ or $\psi$ are required at test time.

\paragraph{Reward Encoding Is Difficult In Zero-Shot RL.}
Zero-shot inference (Eq.~\eqref{eq:inference}) solves
$\Psi z \approx r$ by least squares.
In sparse-reward environments, the reward signal is
nonzero for fewer than $1\%$ of transitions, making $r$ almost
entirely zero.

With a dense $\psi$, only a handful of its $d_\psi$ output coordinates
may actually track reward-relevant structure; the remaining
coordinates vary across states for reasons unrelated to reward.
This is expected of $\psi$ as trained: the TD-JEPA objective optimises $\psi$ to serve every policy
$\pi_z$, $z \in Z$, and to supply targets for the multi-step
predictor $T_\psi$, not to discriminate the single reward $r$ given
at inference time.
Individual coordinates may therefore encode nuisance state variables,
structure relevant to other tasks in $Z$, or transient dynamics needed
for multi-step prediction -- all useful for $\psi$'s training
objective, but uninformative about $r$.

Because the least-squares fit in Eq.~\eqref{eq:inference} assigns a
weight to every coordinate of $\psi$ regardless of relevance, the
reward-irrelevant coordinates pick up spurious correlations
with the few nonzero entries of $r$ over the finite sample
$\mathcal{D}_{\mathrm{rwd}}$.
The resulting $z_r$ is nonzero across all $d_\psi$ coordinates, but
mass on reward-irrelevant coordinates adds noise
to every downstream prediction $\psi(s)^{\top}z_r$
(Eq.~\eqref{eq:reward_linear}), reducing the contrast between
goal-region states and the background and flattening the predicted
reward landscape relative to what a more concentrated $z_r$ would produce.

\begin{algorithm}[t]
\caption{Update ($\phi$: state encoder, $\psi$: task encoder, $T_\phi$: state predictor, $T_\psi$: task predictor) in \textsc{Sparse TD-JEPA}}
\label{alg:sparse-tdjepa}
\begin{algorithmic}
\setlength{\itemsep}{6pt}
\STATE $\{(s_i,a_i,s_i')\}_{i=1}^B \leftarrow \mathrm{SAMPLE}(\mathcal{D})$; \ $z_i \leftarrow \mathrm{SAMPLE\_Z}()$
\STATE $\mathcal{L} \leftarrow \mathcal{L}_\phi + \mathcal{L}_\psi + \lambda_\phi \mathcal{L}_{\mathrm{orth}}(\phi) + \lambda_\psi \mathcal{L}_{\mathrm{orth}}(\psi)$
\STATE \textcolor{blue}{$\mathcal{L} \leftarrow \mathcal{L} + \lambda_{\mathrm{sp}} \cdot \tfrac{1}{B}\sum_i \|\mathrm{project\_z}(\psi(s_i))\|_1$} \hfill \textcolor{blue}{$\triangleright$ \textbf{only one} line needs to be added.}
\STATE $\phi,\psi,T_\phi,T_\psi \leftarrow \mathrm{ADAM\_STEP}(\phi,\psi,T_\phi,T_\psi,\ \nabla \mathcal{L})$
\end{algorithmic}
\end{algorithm}

\section{Method: Sparse TD-JEPA}
\label{sec:method}

We propose to induce \emph{per-example sparsity} in the projected task embeddings $\mathrm{project\_z}(\psi(s))$ during training, encouraging each embedding to concentrate its mass on fewer strongly active dimensions.
The intended effect is that reward-predictive information occupies a smaller effective subspace of $\psi$'s output, so that least-squares inference recovers $z_r$ as a direction aligned with that subspace, producing a sharper, more concentrated inferred reward landscape with a stronger signal for the downstream policy to follow.
We note that Eq.~\eqref{eq:sparse} constrains each embedding individually and does not by itself force different states to share a support; whether a shared low-dimensional structure emerges is an empirical question, which we probe directly in the experiments (via the inferred reward landscape and its correlation with task progress) rather than assume.
Because the encoder output is constrained to the sphere of radius $\sqrt{d_\psi}$, the $\ell_1$ penalty cannot shrink the magnitude of the representation; on the sphere, $\|x\|_1$ is minimized exactly by coordinate-sparse directions, so the penalty purely reorients embeddings toward sparser directions rather than acting as a generic shrinkage regularizer.

\subsection{Sparsity Penalty}
\label{sec:sparse_loss}

We augment the TD-JEPA objective with an $\ell_1$ penalty on the
\emph{projected} task encoder output:
\begin{equation}
  \mathcal{L}_{\mathrm{sp}}
  = \frac{1}{B}\sum_{i=1}^{B}
    \bigl\|\,\mathrm{project\_z}(\psi(s_i))\,\bigr\|_1.
  \label{eq:sparse}
\end{equation}
Here $\|\cdot\|_1 = \sum_k |(\cdot)_k|$ denotes the sum of absolute
values, and the mean is taken over the $B$ states in the minibatch.
The full training objective is
\begin{equation}
  \mathcal{L}
  = \mathcal{L}_\phi + \mathcal{L}_\psi
  + \lambda_\phi\,\mathcal{L}_{\mathrm{orth}}(\phi)
  + \lambda_\psi\,\mathcal{L}_{\mathrm{orth}}(\psi)
  + \lambda_{\mathrm{sp}}\,\mathcal{L}_{\mathrm{sp}},
  \label{eq:total}
\end{equation}
where $\lambda_\phi, \lambda_\psi \geq 0$ are the orthonormality loss
coefficients fixed from the base TD-JEPA configuration and
$\lambda_{\mathrm{sp}} \geq 0$ is the sparsity-controlling coefficient. 
Dividing by $B$ mirrors the normalisation of $\mathcal{L}_\phi$ and
$\mathcal{L}_\psi$ (both use \texttt{.pow(2).sum(-1).mean()}),
making $\lambda_{\mathrm{sp}}$ a stable fraction of the main loss
across values of $B$ and $d_\psi$.
The implementation of Sparse TD-JEPA is extremely simple, requiring only one additional line of code added to the base update step, as shown in Algorithm~\ref{alg:sparse-tdjepa}.

\subsection{Theoretical Analysis}
\label{sec:theory}

Appendix~\ref{app:theory} formally characterizes what the penalty in Eq.~\eqref{eq:sparse} optimizes and how this propagates to zero-shot inference. The central quantity of the analysis is the \emph{effective dimension} (numerical sparsity) of an embedding $x = \mathrm{project\_z}(\psi(s))$,
\begin{equation}
  d_{\mathrm{eff}}(x) \;:=\; \frac{\|x\|_1^2}{\|x\|_2^2} \;\in\; \bigl[1,\, \|x\|_0\bigr],
  \qquad
  \|x\|_1 \;=\; \sqrt{d_\psi\, d_{\mathrm{eff}}(x)},
  \label{eq:deff-main}
\end{equation}
where the identity on the right holds for every $x$ on the sphere of radius $\sqrt{d_\psi}$. The $\ell_1$ penalty is therefore an exact monotone measure of effective dimension and cannot shrink magnitude (Proposition~\ref{prop:identity}). Rotation-invariant regularizers such as $\ell_2$ penalties are constant on the sphere and have zero gradient (Theorem~\ref{thm:inert}). Among $\ell_p$ penalties, $p=1$ is the convex choice with the largest sparse-to-dense range (Proposition~\ref{prop:lp}). Finally, the inference error splits into an $N$-independent approximation term and a variance term $\sigma^2\mathrm{rank}(\Psi)/N$ (Proposition~\ref{prop:decomp}). The flat sparse-vs-dense gap across $N$ (Figure~\ref{fig:vary_n_r2}) therefore reflects a change in feature geometry.

\section{Experiments}
\label{sec:experiments}

We design our experiments to answer the following research questions:

\begin{itemize}
    \item RQ1: Does inducing sparsity in the task encoder improve zero-shot performance over the dense TD-JEPA baseline and established zero-shot RL methods, across locomotion, navigation, object manipulation, and puzzle-solving benchmarks?
    \item RQ2: Does sparsity produce a sharper, more spatially concentrated predicted reward signal near the goal, and does this explain the stuck-policy failure mode observed with the dense encoder?
    \item RQ3: How does the sharpness of the inferred reward signal depend on the size of the labeled-set $N$ used for zero-shot inference, and does our method retain its advantage across different inference budgets?
\end{itemize}

\begin{figure}[h]
\centering
\includegraphics[]{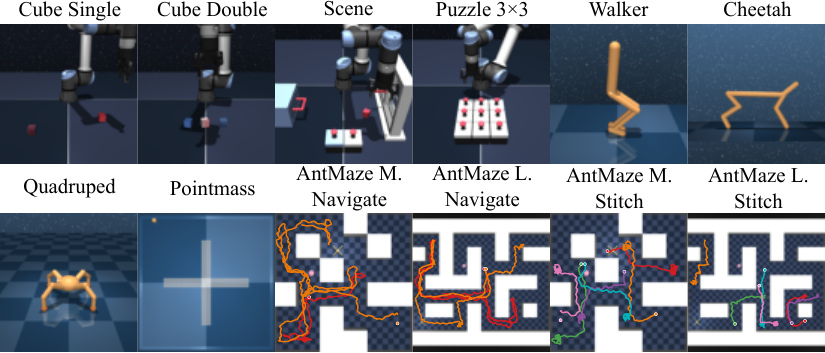} 
\vspace{-2em}
\caption{We test our method for zero-shot performance on various control, locomotive, and exploration tasks.}
\label{fig:environments_grid}
\end{figure}

\subsection{Environments and Datasets}

We evaluate our method on two comprehensive offline reinforcement learning benchmark suites: DeepMind Control Suite (DMC) \cite{tassa2018deepmind,tassa2020dmcontrol} and Offline Goal-Conditioned RL Benchmark (OGBench) \cite{park2025ogbench}, both are popular in recent zero-shot RL literature \cite{lawson2025self,park2024foundation,jajoo2026regularized}.

All datasets are publicly available and are used as-is in our experiments.

\textbf{\textit{DMC}} is a MuJoCo environment with locomotion tasks spanning \textit{walker}, \textit{cheetah}, \textit{quadruped}, and \textit{pointmass} domains. 
We evaluate our method on all four domains across 20 tasks, each trained on 3 million
environment steps of interaction. 
For training, we use the offline data set constructed with the EXploratory data for Offline Reinforcement Learning (ExoRL) protocol \cite{yarats2022don}. 
For zero-shot evaluation, we execute 10 independent rollouts per task and report the average return.

\textit{\textbf{OGBench}} is a MuJoCo RL benchmark with various locomotion, navigation, object manipulation, robotic arm manipulation, and puzzle-solving tasks.
We evaluate our method on nine OGBench domains, each with 5 tasks, for a total of 45 different tasks.
We use the \textit{Antmaze} domain for navigation, goal-stitching, and exploration \cite{park2025ogbench}, the \textit{Scene} and \textit{Puzzle} domains for puzzle-solving locomotion, and the \textit{Cube Single} and \textit{Cube Double} domains for object manipulation tasks using robotic arm.
For each domain, our model is trained on 1 million environment steps of offline interaction.
For zero-shot evaluation, we execute 10 independent rollouts per task and report the average return.
Performance is measured as the success rate of the binary outcome in each rollout \cite{park2025ogbench}.

\subsection{Baselines}

We compare our method against eight zero-shot RL baselines: Laplacian \cite{wu2018laplacian}, ICVF \cite{ghosh2023reinforcement}, HILP \cite{park2024foundation}, FB \cite{touati2021learning}, RLDP \cite{jajoo2026regularized}, BYOL \cite{grill2020bootstrap}, BYOL-$\gamma$ \cite{lawson2025self}, and TD-JEPA \cite{bagatella2025td}. Each baseline is tuned over its own per-domain grid of learning rates and orthonormality (or goal-sampling) coefficients to ensure a fair comparison. ICVF, BYOL, BYOL-$\gamma$ are representation-learning objectives not originally proposed for zero-shot RL. We use the public codebase of \citet{bagatella2025td}, which contains the adaptation of these methods for the tasks in our experiments.

\subsection{Computing And Environment Infrastructure}

OGBench experiments were run on a single NVIDIA H200 NVL GPU (141GB HBM3e), with an Intel Xeon 6960P (Granite Rapids) CPU, 754GB system RAM, and Linux (kernel 5.14, RHEL 9-family). DMC experiments were run on a single NVIDIA H100 NVL GPU (94GB HBM3), with an AMD EPYC 9454 (48-Core) CPU, 594GB system RAM, and Linux (kernel 5.15, Ubuntu 22.04-family).
All experiments used Python 3.11 and PyTorch 2.9 (CUDA 12.8), with gymnasium 1.2, ogbench 1.2, MuJoCo 3.3, NumPy 2.3, pandas 2.3, and scikit-learn 1.9.

\section{Results}

\subsection{Zero-shot Performance}

Table~\ref{tab:main_results} reports zero-shot performance across all benchmarks.
Sparse TD-JEPA has a higher mean score than TD-JEPA in every domain. 
On antmaze-medium-navigate ($85.4 \pm 4.4$), it surpasses the previous best, HILP ($83.6 \pm 2.6$), and improves over the TD-JEPA baseline ($70.4 \pm 3.7$) by $15$ points.
On antmaze-large-navigate ($70.6 \pm 8.4$), it outperforms the next best method, ICVF ($58.4 \pm 1.9$), and improves over TD-JEPA ($57.2 \pm 4.3$) by $13.4$ points.
Gains are similarly consistent on the more challenging stitch tasks: on antmaze-medium-stitch ($79.6 \pm 8.3$ vs.\ $61.6 \pm 4.5$ for TD-JEPA) and antmaze-large-stitch ($55.6 \pm 5.5$ vs.\ $40.6 \pm 2.5$ for TD-JEPA), our method achieves the highest success rates, including over the strongest stitch baseline, FB.
On the DMC locomotion benchmark, Sparse TD-JEPA achieves the best performance on all four tasks.

\begin{table*}[t]
\centering
\resizebox{\textwidth}{!}{%
\begin{tabular}{lrrrrrrrrr}
\toprule
 & Laplacian & ICVF & HILP & FB & RLDP & BYOL & BYOL-$\gamma$ & TD-JEPA & \textbf{Ours} \\
\midrule
\textbf{DMC (avg)}
  & $591 \pm 11$ & $619 \pm 10$ & $620 \pm 8$  & $648 \pm 4$
  & $610 \pm 14$ & $619 \pm 11$ & $645 \pm 11$ & $661 \pm 6$ & $\mathbf{714 \pm 3}$ \\
\quad walker
  & $770 \pm 5$  & $727 \pm 16$ & $796 \pm 8$  & $\mathbf{812 \pm 6}$
  & $724 \pm 18$ & $747 \pm 11$ & $786 \pm 10$ & $785 \pm 7$ & $\mathbf{814 \pm 8}$ \\
\quad cheetah
  & $615 \pm 19$ & $606 \pm 17$ & $618 \pm 6$  & $673 \pm 5$
  & $576 \pm 45$ & $623 \pm 24$ & $647 \pm 9$  & $689 \pm 7$ & $\mathbf{720 \pm 6}$ \\
\quad quadruped
  & $635 \pm 39$ & $709 \pm 14$ & $695 \pm 11$ & $596 \pm 9$
  & $665 \pm 14$ & $612 \pm 28$ & $683 \pm 26$ & $691 \pm 5$ & $\mathbf{801 \pm 10}$ \\
\quad pointmass
  & $345 \pm 22$ & $436 \pm 11$ & $371 \pm 37$ & $\mathbf{513 \pm 20}$
  & $\mathbf{476 \pm 39}$ & $\mathbf{493 \pm 41}$ & $465 \pm 18$ & $\mathbf{479 \pm 24}$ & $\mathbf{523  \pm 40}$ \\
\midrule
\textbf{OGBench (avg)}
  & $14.8 \pm 1.3$ & $30.9 \pm 0.6$ & $38.0 \pm 1.1$ & $39.0 \pm 0.7$
  & $27.1 \pm 0.8$ & $26.4 \pm 0.8$ & $30.4 \pm 0.9$ & $38.0 \pm 0.8$ & $\mathbf{48.4 \pm 2.3}$ \\
\quad antmaze-mn
  & $50.0 \pm 4.9$ & $\mathbf{79.8 \pm 2.6}$ & $\mathbf{83.6 \pm 2.6}$ & $73.0 \pm 2.7$
  & $74.6 \pm 4.2$ & $58.4 \pm 2.0$ & $51.4 \pm 1.6$ & $70.4 \pm 3.7$ & $\mathbf{85.4 \pm 4.4}$ \\ 
\quad antmaze-ln
  & $21.6 \pm 3.9$ & $58.4 \pm 1.9$ & $52.6 \pm 3.9$ & $36.8 \pm 4.3$
  & $36.4 \pm 4.7$ & $26.6 \pm 3.0$ & $21.8 \pm 3.6$ & $57.2 \pm 4.3$ & $\mathbf{70.6 \pm 8.4}$ \\
\quad antmaze-ms
  & $21.4 \pm 4.3$ & $39.0 \pm 3.3$ & $50.6 \pm 2.5$ & $\mathbf{70.4 \pm 4.0}$
  & $58.4 \pm 3.3$ & $60.6 \pm 5.1$ & $45.6 \pm 2.8$ & $61.6 \pm 4.5$ & $\mathbf{79.6 \pm 8.3}$ \\
\quad antmaze-ls
  & $11.8 \pm 1.5$ & $13.2 \pm 1.6$ & $12.2 \pm 1.8$ & $\mathbf{49.8 \pm 5.6}$
  & $19.6 \pm 2.7$ & $25.8 \pm 4.3$ & $20.2 \pm 1.8$ & $40.6 \pm 2.5$ & $\mathbf{55.6 \pm 5.5}$ \\
\quad antmaze-me
  & $0.8 \pm 0.6$  & $0.0 \pm 0.0$  & $2.0 \pm 0.8$  & $\mathbf{51.6 \pm 2.7}$
  & $4.8 \pm 2.4$  & $11.4 \pm 2.3$ & $19.6 \pm 2.5$ & $20.2 \pm 2.4$ & $26.6 \pm 3.9$ \\  
\quad scene
  & $7.8 \pm 1.3$  & $\mathbf{45.4 \pm 2.3}$ & $\mathbf{43.8 \pm 1.9}$ & $12.8 \pm 1.6$
  & $11.6 \pm 1.6$ & $15.4 \pm 1.4$ & $14.4 \pm 2.3$ & $38.4 \pm 1.4$ & $\mathbf{43.4 \pm 2.7}$ \\
\quad puzzle-3x3
  & $2.8 \pm 0.7$  & $16.6 \pm 0.7$ & $2.8 \pm 0.7$  & $4.8 \pm 0.7$
  & $14.6 \pm 0.9$ & $13.2 \pm 1.9$ & $18.8 \pm 0.4$ & $15.6 \pm 1.1$ & $\mathbf{22.0 \pm 1.8}$ \\
\quad cube-single
  & $15.1 \pm 1.5$ & $20.4 \pm 1.9$ & $\mathbf{74.2 \pm 3.5}$ & $49.6 \pm 3.8$
  & $19.8 \pm 2.4$ & $22.0 \pm 3.2$ & $\mathbf{79.4 \pm 2.8}$ & $34.2 \pm 2.9$ & $46.0 \pm 1.4$ \\
\quad cube-double
  & $2.0 \pm 0.4$  & $5.0 \pm 0.8$  & $\mathbf{20.0 \pm 2.7}$ & $2.6 \pm 0.4$
  & $3.8 \pm 0.8$  & $4.4 \pm 0.7$  & $2.6 \pm 0.7$  & $3.6 \pm 0.8$ & $6.4 \pm 1.6$ \\
\bottomrule
\end{tabular}%
}
\vspace{-0.5em}
\caption{Performance of zero-shot algorithms for DMC (reward) and OGBench (success rate) with
proprioception inputs. We report means and standard errors across seeds; DMC values are rounded
to the nearest integer and OGBench values to one decimal place. Numbers are bold for top
algorithms if confidence intervals overlap.}
\label{tab:main_results}
\end{table*}

In the next experiment, we investigate why our method produces higher task performance than the dense baseline.

\begin{figure*}[h]
\centering
\includegraphics[]{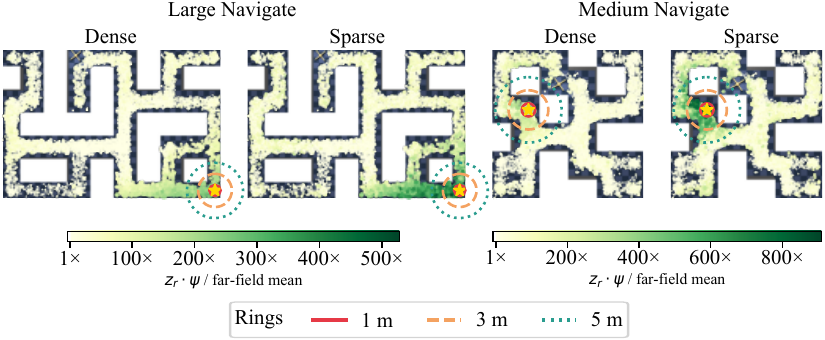}
\vspace{-2em}
\caption{Spatial reward landscape for task~3 on antmaze-large-navigate (\emph{left pair}) and antmaze-medium-navigate (\emph{right pair}).
Colour encodes $z_r^\top\psi(s)$ normalized by the model's own mean predicted reward for far-field states ($>$10\,m from goal).
Within each domain, Dense TD-JEPA (\emph{left}) and Sparse TD-JEPA (\emph{right}) share the same colour scale.
The star marks the goal; rings indicate 1\,m, 3\,m, and 5\,m distance.}
\label{fig:reward_landscape_ratio}
\end{figure*}

\textbf{Analysis Experiment 1: Distance--Reward Correlation.}
Figure~\ref{fig:reward_landscape_ratio} shows the spatial concentration of the predicted reward, $z_r^\top\psi(s)$ normalized by each model's own mean predicted reward for far-field states ($>$10\,m), for both encoders on the same task.
For the dense (baseline) encoder, the normalized reward stays elevated across nearly the whole goal room, giving little indication of the exact goal location once the agent is inside it. 
For the sparse encoder, the same signal is more informative: it peaks sharply at the goal itself and falls off visibly toward the room's edges. 

To verify this observation quantitatively, we measure the correlation betweem the Dense and Sparse model's learned value estimate $V(s) = \psi(s)^\top z_r$ against a distance-based task progress signal $x$: for the goal-conditioned OGBench domains, $x$ is the Euclidean distance from $s$ to the goal position; for the DMC domains, which have no spatial goal, $x$ is the environment's own instantaneous reward.
An example of such correlation is shown in Figure \ref{fig:scatterplot_full}.
For each of 20 independently resampled observation batches (shared between sparse and baseline), we fit $V(s) \sim x$ via linear regression and report the 5-fold cross-validated $R^2$. The statistical significance is determined through a Welch's $t$-test.

The results are shown in Figure~\ref{fig:boxplot_full}. 
Sparse encoder produced a statistically significant higher correlation between predicted reward and task completion distance than Dense encoder in all domains except for Large Navigate.

Testing whether the sparse encoder better represents other task dimensions besides distance-to-goal is left to future work.

\begin{figure}[h]
\centering
\includegraphics[]{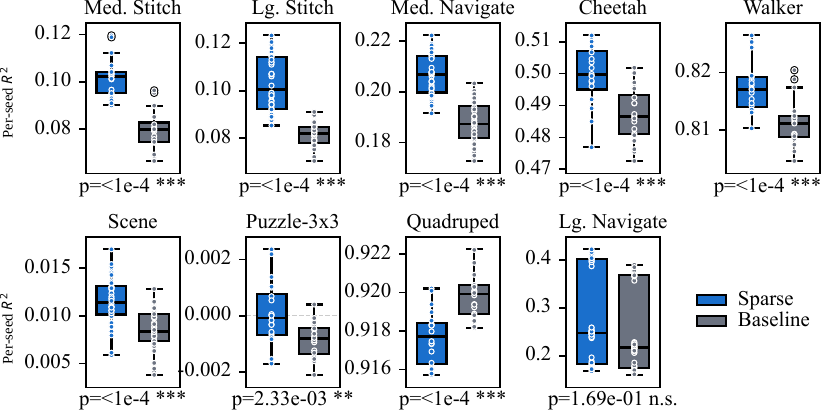} 
\vspace{-1.5em}
\caption{Correlation between each model's learned value estimate $V(s) = \psi(s) \cdot z_r$ and distance signal.}
\label{fig:boxplot_full}
\end{figure}

\textbf{Analysis Experiment 2: Effect of Labeled-Set Size $N$ on Reward Signal Quality.}
We characterise how well the inferred value estimate $V(s) = \psi(s)^\top z_r$ tracks the ground-truth signal (distance-to-goal for OGBench, reward for DMC), measured by $R^2$, as a function of the number of labeled transitions $N$ used to estimate $z_r$.
$N$ is the number of labeled (state, reward) transitions used to solve for the task vector $z_r$ via least-squares ($z_r = \mathrm{lstsq}(\psi(s), r)$, then projected onto the $\psi$-sphere).

\begin{figure}[h]
\centering
\includegraphics[]{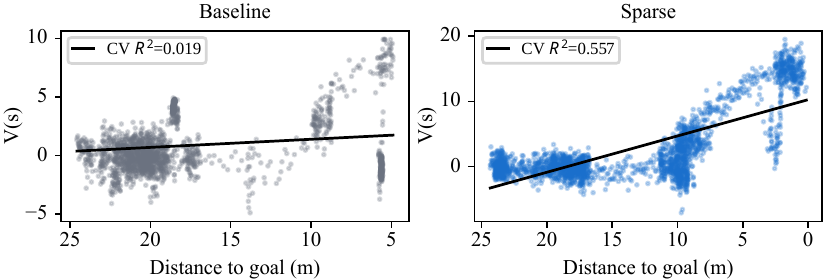}
\vspace{-2em}
\caption{Example of linear regression correlation of Dense (Baseline) and Sparse task encoders.}
\label{fig:scatterplot_full}
\end{figure}

$z_r$ has $\psi$-dim $=50$ free parameters, fit by plain unregularized least squares. Below $N \approx 50$ the system is underdetermined, and for $N$ in the low hundreds (just above 50) unregularized OLS sits at the interpolation threshold, where the fit is barely determined and hence maximally sensitive to noise in those specific points. To stay clear of this unstable regime, we sweep $N$ over log-decades starting at $N=10^4$ (well above $\psi$-dim) up to each domain's full training-dataset size; since that full-dataset size differs by domain, the swept range is not the same across domains.

For each $N$, we refit $z_r$ from 20 independently resampled batches of $N$ labeled transitions and, for each, compute the 5-fold cross-validated $R^2$ (degree 1) between $V(s)$ and the ground-truth signal, evaluated on a shared, fixed set of states so that sparse and baseline are compared on identical inputs at every $N$. 
This isolates the effect of labeled-set size from any confound in which states each policy visits.
We report the mean $R^2$ across each 20 independently resampled batches with their standard error (SEM), and flag any $N$ where the sparse-vs-baseline difference is statistically significant via Welch's $t$-test ($p<0.05$).

Results are shown in Figure~\ref{fig:vary_n_r2}.
The sparse encoder attains a significantly higher $R^2$ than the baseline across nearly the full swept range of $N$ in six of the nine domains (antmaze-ls, antmaze-mn, antmaze-ms, scene, cheetah, walker); antmaze-ln shows the same direction without reaching significance at any $N$; puzzle-3x3 shows no reliable signal for either model ($R^2 \approx 0$).
Within each domain the sparse-vs-baseline gap in $R^2$ stays roughly flat across $N$, indicating that the sparse encoder's advantage in how well $V(s)$ tracks the ground-truth signal is not merely a low-data artifact tied to the $N=10^4$ inference budget used in the main experiments (see the Experiments section), but persists as more labeled data becomes available.

\begin{figure}[h]
\centering
\includegraphics[width=\linewidth]{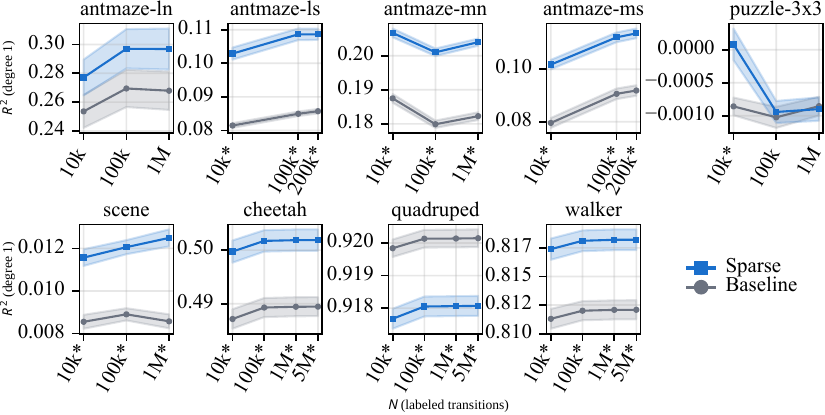}
\vspace{-2em}
\caption{Cross-validated $R^2$ between learned value estimate and the distance signal.}
\label{fig:vary_n_r2}
\end{figure}

\section{Conclusion}
\label{sec:conclusion}

We identified an underexplored failure mode of successor-feature-based zero-shot RL, dense task encoders yield a diffuse inferred reward signal with little contrast between goal-relevant states and the background, and addressed it with a simple one-line $\ell_1$ penalty on TD-JEPA's projected task embeddings, which provably controls only the effective dimensionality of the representation and leaves the closed-form inference procedure untouched. Sparse TD-JEPA improves over the dense baseline in almost every evaluated domain, and our analyses tie these gains to a sharper, more goal-concentrated inferred reward that persists across labeled-set sizes. Future work includes applying the sparsity penalty to other successor-feature-based zero-shot RL methods beyond TD-JEPA, and leveraging the resulting sparse, low-effective-dimension task embeddings for mechanistic interpretability of what individual latent dimensions encode.

\subsection*{AI use statement}

In this work, we used AI tools (LLM-based assistants) to assist with polishing the manuscript text. We have not used generative AI tools for research ideation or method design. The identification of the failure mode, the proposed sparsity penalty, the theoretical analysis, and all experimental conclusions are the authors' own. We have reviewed all AI-assisted work: AI-assisted text was checked against the underlying experimental results. We take responsibility for the final content of this work, including text, claims, and artifacts produced with the aid of generative AI.

\subsection*{Reproducibility statement}

We have taken several steps to make our results reproducible. All theoretical claims summarized in Section~\ref{sec:theory} are proved in Appendix~\ref{app:theory}, which states every assumption and includes complete proofs. Appendix~\ref{sec:seeds} documents the seeds and the result-aggregation protocol. Appendix~\ref{sec:baselines-grids} specifies all baselines and their hyperparameter grids, and Appendix~\ref{app:datasets} gives full details of the datasets, environments, and evaluation protocol. Compute and infrastructure details are provided in Section~\ref{sec:experiments}.

\bibliography{iclr2027_conference}
\bibliographystyle{iclr2027_conference}

\appendix

\section{Theoretical Analysis of the Sparsity Penalty}
\label{app:theory}

This appendix develops the formal analysis summarized in Section~\ref{sec:theory}. Throughout, we characterize (a)~the quantity that the penalty $\mathcal{L}_{\mathrm{sp}}$ of Eq.~\eqref{eq:sparse} actually optimizes under TD-JEPA's sphere constraint, (b)~the behavior of the broader family of norm-based regularizers under the same constraint, (c)~the optimization dynamics that the penalty induces on the sphere, and (d)~how the geometry of the task encoder propagates to the least-squares inference step of Eq.~\eqref{eq:inference}. The section closes with a systematic comparison of alternative regularizers in light of these results.

\subsection{Setup and Notation}
\label{app:theory-setup}

Throughout this appendix we abbreviate $d := d_\psi$ and write
\begin{equation}
  \mathcal{S}_d \;:=\; \bigl\{\, x \in \mathbf{R}^d \;:\; \|x\|_2 = \sqrt{d} \,\bigr\}
\end{equation}
for the sphere onto which task embeddings and task vectors are projected, and
\begin{equation}
  \Pi(y) \;:=\; \mathrm{project\_z}(y) \;=\; \sqrt{d}\,\frac{y}{\|y\|_2}, \qquad y \neq 0,
\end{equation}
for the projection map of Eq.~\eqref{eq:projz}. The penalized quantity in Eq.~\eqref{eq:sparse} is $x = \Pi(\psi(s)) \in \mathcal{S}_d$, so every statement below about $x \in \mathcal{S}_d$ applies verbatim to the embeddings seen by the penalty. For $x \in \mathbf{R}^d$ we write $\|x\|_p = (\sum_k |x_k|^p)^{1/p}$ for $p \geq 1$, $\mathrm{supp}(x) = \{k : x_k \neq 0\}$, and $\|x\|_0 = |\mathrm{supp}(x)|$. We call $x$ \emph{$1$-sparse} if $\|x\|_0 = 1$ and \emph{flat on its support} if $|x_k|$ is constant over $k \in \mathrm{supp}(x)$; a flat vector with full support ($\|x\|_0 = d$) is called \emph{flat}. An \emph{orthogonal transformation} is a matrix $U \in O(d)$, i.e., $U^\top U = I$.

Our analysis is phrased in terms of the following continuous notion of sparsity, standard in the sparse-recovery literature \citep{hoyer2004non,hurley2009comparing}.

\begin{definition}[Effective dimension / numerical sparsity]
\label{def:effdim}
For $x \in \mathbf{R}^d \setminus \{0\}$, the \emph{effective dimension} of $x$ is
\begin{equation}
  d_{\mathrm{eff}}(x) \;:=\; \frac{\|x\|_1^2}{\|x\|_2^2}.
\end{equation}
\end{definition}

The functional $d_{\mathrm{eff}}$ is scale-invariant ($d_{\mathrm{eff}}(cx) = d_{\mathrm{eff}}(x)$ for $c \neq 0$) and continuous on $\mathbf{R}^d \setminus \{0\}$; Proposition~\ref{prop:identity} shows $1 \leq d_{\mathrm{eff}}(x) \leq \|x\|_0$, so $d_{\mathrm{eff}}$ is a continuous surrogate for the (discontinuous) support size $\|x\|_0$, and $d_{\mathrm{eff}}(x) = k$ exactly on vectors that are flat on a support of size $k$. Intuitively, $d_{\mathrm{eff}}(x)$ counts the number of coordinates of $x$ that carry non-negligible mass.

\subsection{The Penalty Is an Exact Measure of Effective Dimensionality}
\label{app:theory-identity}

The first result makes precise, and strengthens, the informal claim of Section~\ref{sec:method}: under the sphere constraint the $\ell_1$ penalty coincides, up to a fixed strictly monotone transformation, with the effective dimension of the embedding. In particular the penalty carries no information about, and exerts no pressure on, the magnitude of the representation, because that magnitude is fixed by the constraint.

\begin{proposition}[Exact identity and extreme points]
\label{prop:identity}
Let $x \in \mathcal{S}_d$. Then:
\begin{enumerate}
    \item[(i)] $\displaystyle \|x\|_1 = \sqrt{d\, d_{\mathrm{eff}}(x)}$, and consequently the per-example penalty is a strictly increasing function of the effective dimension $d_{\mathrm{eff}}(x)$;
    \item[(ii)] $1 \leq d_{\mathrm{eff}}(x) \leq \|x\|_0 \leq d$, with $d_{\mathrm{eff}}(x) = \|x\|_0$ if and only if $x$ is flat on its support;
    \item[(iii)] $\sqrt{d} \leq \|x\|_1 \leq d$; the lower bound is attained exactly at the $2d$ signed axis vectors $x = \pm\sqrt{d}\,e_k$ (the $1$-sparse elements of $\mathcal{S}_d$), and the upper bound exactly at the $2^d$ flat vectors with $|x_k| = 1$ for all $k$ (the maximally dense elements).
\end{enumerate}
\end{proposition}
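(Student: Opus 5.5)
The plan is to reduce everything to two elementary norm inequalities on the support of $x$, using only that $\|x\|_2^2 = d$ on $\mathcal{S}_d$.

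For (i), I will simply substitute $\|x\|_2^2 = d$ into Definition~\ref{def:effdim}. This gives $d_{\mathrm{eff}}(x) = \|x\|_1^2/d$, so $\|x\|_1 = \sqrt{d\, d_{\mathrm{eff}}(x)}$, using $\|x\|_1 \ge 0$. Since $t \mapsto \sqrt{d t}$ is strictly increasing on $[0,\infty)$, the per-example penalty $\|\Pi(\psi(s))\|_1$ is a strictly increasing function of $d_{\mathrm{eff}}$. Both $d_{\mathrm{eff}}$ and the penalty are scale-invariant, so the magnitude of $x$ plays no role.

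For (ii), the upper bound $d_{\mathrm{eff}}(x) \le \|x\|_0$ comes from Cauchy--Schwarz applied on $S = \mathrm{supp}(x)$:
\[
\|x\|_1 = \sum_{k\in S} |x_k|\cdot 1 \le \|x\|_2 \sqrt{|S|}.
\]
Squaring and dividing by $\|x\|_2^2$ gives the bound. Equality in Cauchy--Schwarz holds iff $(|x_k|)_{k\in S}$ is proportional to the all-ones vector on $S$, which is exactly flatness on the support. The lower bound $d_{\mathrm{eff}}(x) \ge 1$ is $\|x\|_2 \le \|x\|_1$. This follows from
\[
\|x\|_1^2 = \sum_k x_k^2 + \sum_{j\ne k}|x_j||x_k| \ge \|x\|_2^2,
\]
and it needs $x \ne 0$, which holds on $\mathcal{S}_d$. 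The final inequality $\|x\|_0 \le d$ is trivial.

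For (iii), I will combine (i) and (ii): $1 \le d_{\mathrm{eff}} \le d$ translates directly to $\sqrt d \le \|x\|_1 \le d$. The only delicate part, and the main thing to check carefully, is the exact characterization of the extremizers.

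For the lower bound, equality $\|x\|_1 = \|x\|_2$ forces every cross term $|x_j||x_k|$ with $j \ne k$ to vanish, so $x$ has at most one nonzero coordinate. The norm constraint then fixes $x = \pm\sqrt d\, e_k$, which gives exactly $2d$ points.

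For the upper bound, $\|x\|_1 = d$ requires $d_{\mathrm{eff}} = d$. The chain $d_{\mathrm{eff}} \le \|x\|_0 \le d$ forces both $\|x\|_0 = d$ and flatness on the support, i.e. $|x_k| = c$ for all $k$. Then $d c^2 = d$ gives $c = 1$, so $x \in \{\pm 1\}^d$, which gives $2^d$ points. Conversely, I will verify that each of these points attains its bound by direct computation. That completes the proposition.
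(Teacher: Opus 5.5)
Your proof is correct and matches the paper's argument essentially step for step: substituting $\|x\|_2^2=d$ for (i), the cross-term expansion and Cauchy--Schwarz on the support for (ii), and transferring the equality cases through the chain $1\le d_{\mathrm{eff}}(x)\le\|x\|_0\le d$ for (iii). One small correction that does not affect the argument: your aside that ``the penalty is scale-invariant'' holds for the penalty as a function of $\psi(s)$, because of the projection, but not for $\|x\|_1$ as a function of $x$.
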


\begin{proof}
(i) is immediate from Definition~\ref{def:effdim} and $\|x\|_2^2 = d$: $d_{\mathrm{eff}}(x) = \|x\|_1^2/d$, hence $\|x\|_1 = \sqrt{d\,d_{\mathrm{eff}}(x)}$, and $t \mapsto \sqrt{d\,t}$ is strictly increasing.

(ii) Lower bound: $\|x\|_1^2 = \sum_k x_k^2 + \sum_{j \neq k} |x_j||x_k| \geq \|x\|_2^2$, so $d_{\mathrm{eff}}(x) \geq 1$, with equality if and only if all cross terms vanish, i.e., $x$ has at most one nonzero coordinate. Upper bound: by the Cauchy--Schwarz inequality applied over $\mathrm{supp}(x)$,
\begin{equation}
  \|x\|_1 = \sum_{k \in \mathrm{supp}(x)} 1 \cdot |x_k|
  \;\leq\; \sqrt{\|x\|_0}\, \Bigl(\sum_k x_k^2\Bigr)^{1/2}
  = \sqrt{\|x\|_0}\,\|x\|_2,
\end{equation}
so $d_{\mathrm{eff}}(x) \leq \|x\|_0$, with equality if and only if $(|x_k|)_{k \in \mathrm{supp}(x)}$ is proportional to the all-ones vector, i.e., $x$ is flat on its support.

(iii) follows by combining (i) with (ii): $\|x\|_1 = \sqrt{d\,d_{\mathrm{eff}}(x)} \in [\sqrt{d}, \sqrt{d \cdot d}] = [\sqrt{d}, d]$, and the equality characterizations transfer. For the lower extreme, $d_{\mathrm{eff}}(x) = 1$ forces $\|x\|_0 = 1$, and membership in $\mathcal{S}_d$ then forces $x = \pm\sqrt{d}\,e_k$. For the upper extreme, $d_{\mathrm{eff}}(x) = d$ forces $\|x\|_0 = d$ and flatness, and membership in $\mathcal{S}_d$ forces $|x_k| = 1$ for all $k$.
\end{proof}

\begin{corollary}[No shrinkage; the penalty acts on direction only]
\label{cor:noshrink}
For every state $s$, the penalized embedding satisfies $\|\Pi(\psi(s))\|_2 = \sqrt{d}$ identically, so the penalty cannot decrease (or increase) the norm of any representation. By Proposition~\ref{prop:identity}(i), minimizing $\mathcal{L}_{\mathrm{sp}} = \frac{1}{B}\sum_i \|\Pi(\psi(s_i))\|_1 = \frac{\sqrt{d}}{B}\sum_i \sqrt{d_{\mathrm{eff}}\bigl(\Pi(\psi(s_i))\bigr)}$ is equivalent to minimizing the batch average of $\sqrt{d_{\mathrm{eff}}(\cdot)}$; the sole quantity the penalty can move is the per-example effective dimension.
\end{corollary}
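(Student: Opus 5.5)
The corollary follows almost entirely from the definition of $\Pi$ and Proposition~\ref{prop:identity}(i), so the plan is a two-step verification.

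\textbf{Step 1: the norm is fixed.} For any state $s$ with $\psi(s)\neq 0$, homogeneity of the norm gives
\[
\|\Pi(\psi(s))\|_2=\sqrt{d}\,\frac{\|\psi(s)\|_2}{\|\psi(s)\|_2}=\sqrt{d}.
\]
Hence $\Pi(\psi(s))\in\mathcal{S}_d$ for every choice of encoder parameters. The penalized quantity therefore has constant norm along any training trajectory, and no parameter update, penalized or not, can change it.

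I would state explicitly the standing assumption that $\psi(s)\neq 0$. This assumption is already implicit in the definition of $\Pi$; a remark could add that in practice a small $\epsilon$ guards the division, but the corollary only concerns the idealized map.

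\textbf{Step 2: rewrite the penalty.} Apply Proposition~\ref{prop:identity}(i) to each $x_i=\Pi(\psi(s_i))\in\mathcal{S}_d$ to get $\|x_i\|_1=\sqrt{d\,d_{\mathrm{eff}}(x_i)}$. Summing and dividing by $B$ gives the displayed identity
\[
\mathcal{L}_{\mathrm{sp}}=\frac{\sqrt{d}}{B}\sum_i\sqrt{d_{\mathrm{eff}}(x_i)}.
\]
Since $\sqrt{d}/B$ is a positive constant independent of the parameters, minimizing $\mathcal{L}_{\mathrm{sp}}$ over parameters is equivalent to minimizing the batch average of $\sqrt{d_{\mathrm{eff}}}$. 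The two objectives have the same minimizers, and their gradients differ by a positive scalar.

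\textbf{Step 3: the penalty sees only direction.} The map $\psi(s)\mapsto\Pi(\psi(s))$ is scale-invariant: for $c>0$, $\Pi(c\,\psi(s))=\Pi(\psi(s))$. Consequently $\mathcal{L}_{\mathrm{sp}}$ depends on $\psi(s_i)$ only through its direction $\psi(s_i)/\|\psi(s_i)\|_2$. It follows that the only per-example quantity it can move is $d_{\mathrm{eff}}$, which is itself scale-invariant.

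To make this precise at the gradient level, one could differentiate along the radial direction. The derivative of $\Pi$ at $y$ annihilates $y$, since the Jacobian of $\Pi$ is $\sqrt{d}\,(I-\hat y\hat y^\top)/\|y\|_2$ with $\hat y=y/\|y\|_2$. Therefore the gradient of the penalty with respect to $\psi(s_i)$ is orthogonal to $\psi(s_i)$. The one mild obstacle is the non-differentiability of $|\cdot|$ at zero coordinates, which I would handle with subgradients; the orthogonality argument goes through unchanged.
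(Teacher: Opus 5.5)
Your Steps 1 and 2 are exactly the paper's justification: the corollary is read off from $\|\Pi(y)\|_2=\sqrt{d}$, which holds by homogeneity of the norm, together with Proposition~\ref{prop:identity}(i), with $\sqrt{d}/B$ a positive constant that does not depend on the parameters. Your Step 3 is a correct refinement the paper does not include: the Jacobian $\frac{\sqrt{d}}{\|y\|_2}(I-\hat y\hat y^\top)$ annihilates the radial direction. One small correction: the fact that the penalty depends only on $\sqrt{d_{\mathrm{eff}}}$ comes from Step 2, not from scale-invariance, because the direction of an embedding carries more information than its $d_{\mathrm{eff}}$.
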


\begin{remark}
This is the formal counterpart of the ridge-versus-lasso distinction \citep{tibshirani1996regression} in a constrained setting: in unconstrained regression, an $\ell_1$ penalty both shrinks and sparsifies, and its effects are entangled; on the sphere the shrinkage component is annihilated by the constraint, leaving sparsification as the only degree of freedom. The training signal added by Algorithm~\ref{alg:sparse-tdjepa} is therefore a pure effective-dimensionality pressure.
\end{remark}

\subsection{Behavior of Norm-Based Regularizers Under the Sphere Constraint}
\label{app:theory-inert}

We next characterize how the broader design space of regularizers interacts with the sphere constraint. The organizing concept is rotation invariance.

\begin{definition}[Rotation-invariant regularizer]
\label{def:rotinv}
A function $\Omega : \mathbf{R}^d \to \mathbf{R}$ is \emph{rotation-invariant} if $\Omega(Ux) = \Omega(x)$ for every $x \in \mathbf{R}^d$ and every $U \in O(d)$.
\end{definition}

Examples include $\Omega(x) = \|x\|_2$, $\Omega(x) = \|x\|_2^2$, any composition $\Omega(x) = h(\|x\|_2)$ with $h : \mathbf{R}_{\geq 0} \to \mathbf{R}$, and smoothed variants such as $\Omega(x) = \mathbb{E}_{\xi \sim \mathcal{N}(0, \tau^2 I)}\, h(\|x + \xi\|_2)$, since an isotropic Gaussian is itself rotation-invariant.

\begin{theorem}[Inertness of rotation-invariant regularizers]
\label{thm:inert}
Let $\Omega : \mathbf{R}^d \to \mathbf{R}$ be rotation-invariant. Then:
\begin{enumerate}
    \item[(i)] $\Omega$ is constant on $\mathcal{S}_d$; call this constant $c_\Omega$;
    \item[(ii)] the composition $\Omega \circ \Pi$ is constant on $\mathbf{R}^d \setminus \{0\}$, and its gradient is identically zero wherever it is differentiable;
    \item[(iii)] for every coefficient $\lambda \geq 0$, the objective $\mathcal{L} + \lambda\, \frac{1}{B}\sum_i \Omega\bigl(\Pi(\psi(s_i))\bigr)$ has the same gradients with respect to all parameters as $\mathcal{L}$, at every parameter vector; hence it has the same stationary points, and gradient-based training (including SGD and Adam, for any step-size schedule and any realization of minibatch sampling) produces identical parameter trajectories.
\end{enumerate}
\end{theorem}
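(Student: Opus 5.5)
The plan is to prove the three parts in sequence, each feeding the next.

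For (i), the key fact is that $O(d)$ acts transitively on every sphere centred at the origin. Given $x, y \in \mathcal{S}_d$, I will exhibit $U \in O(d)$ with $Ux = y$. The explicit choice is the Householder reflection $U = I - 2vv^\top/\|v\|_2^2$ with $v = x - y$ when $x \neq y$. Because $\|x\|_2 = \|y\|_2$, we have $v^\top x = \tfrac12\|v\|_2^2$, and so $Ux = y$. Rotation invariance then gives $\Omega(y) = \Omega(Ux) = \Omega(x)$. Hence $\Omega$ takes a single value $c_\Omega$ on $\mathcal{S}_d$.

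For (ii), note that $\Pi$ maps $\mathbf{R}^d \setminus \{0\}$ into $\mathcal{S}_d$. Therefore $\Omega \circ \Pi \equiv c_\Omega$ there by (i). A constant function on an open set is differentiable everywhere on it, with zero derivative. The phrase ``wherever it is differentiable'' is then automatic; I will point out that no smoothness of $\Omega$ itself is needed.

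For (iii), I will apply the chain rule to the parameters $\theta$ of $\psi$. The added term is $\lambda \frac{1}{B}\sum_i (\Omega\circ\Pi)(\psi_\theta(s_i)) = \lambda c_\Omega$ for every $\theta$, as long as $\psi_\theta(s_i) \neq 0$, which is the standing assumption under which $\Pi$ is defined. A constant in $\theta$ contributes zero gradient. This holds for the parameters of $\psi$ and trivially for those of $\phi$, $T_\phi$ and $T_\psi$, which the term does not involve.

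The remaining claims follow from this gradient equality. Equal gradients at every parameter vector give the same stationary-point set. For the trajectory claim I will argue by induction over optimizer steps. SGD and Adam updates are deterministic functions of the current parameters, the optimizer state (moments and step count) and the minibatch gradient. If the two runs share initialization, schedule and minibatch realization, they receive identical gradients at every step, so their states coincide at every step.

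I expect the only delicate point to be differentiability in (iii). Rather than rely on $\Omega$ being differentiable, I will argue that the regularized objective differs from $\mathcal{L}$ by the constant $\lambda c_\Omega$ on the whole parameter domain, so the two functions have identical derivatives wherever either one is differentiable. This sidesteps any appeal to the chain rule through $\Omega$. I will also remark that autodiff frameworks, which differentiate through the computational graph, could in principle report spurious nonzero values at points where $\Omega$ itself is nondifferentiable. For the given examples, such as $h(\|x\|_2)$ evaluated at $\|x\|_2 = \sqrt{d} > 0$, the true gradient is well defined and equal to zero.
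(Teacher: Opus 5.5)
Your proposal is correct and follows essentially the same route as the paper: transitivity of $O(d)$ on the sphere for (i), constancy of $\Omega\circ\Pi$ on $\mathbf{R}^d\setminus\{0\}$ for (ii), and for (iii) a zero-gradient argument followed by identical optimizer updates at every step. The differences are minor refinements. You construct an explicit Householder reflection where the paper completes to orthonormal bases, and you note that the two objectives differ by the constant $\lambda c_\Omega$, rather than invoking the chain rule through $\Pi$ and $\psi$, which avoids any differentiability assumption on $\Omega$.
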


\begin{proof}
(i) The orthogonal group acts transitively on each centered sphere: for any $x, y$ with $\|x\|_2 = \|y\|_2$ there exists $U \in O(d)$ with $Ux = y$ (complete $x/\|x\|_2$ and $y/\|y\|_2$ to orthonormal bases and map one to the other). Hence $\Omega(y) = \Omega(Ux) = \Omega(x)$ for all $x, y \in \mathcal{S}_d$.

(ii) For every $y \neq 0$, $\Pi(y) \in \mathcal{S}_d$, so $\Omega(\Pi(y)) = c_\Omega$ by (i). A constant function has zero gradient wherever differentiable.

(iii) By (ii) the added term equals the constant $\lambda c_\Omega$ as a function of the parameters (through any parameterization of $\psi$), so its contribution to every parameter gradient is zero; the chain rule through $\Pi$ and $\psi$ transmits a zero upstream gradient. Identical gradients at every point imply identical update sequences for any first-order method fed the same minibatch sequence, hence identical trajectories and identical sets of stationary points.
\end{proof}

\begin{corollary}[Instances]
\label{cor:instances}
Under the sphere constraint of TD-JEPA, an $\ell_2$ penalty $\|\Pi(\psi(s))\|_2$, a squared penalty $\|\Pi(\psi(s))\|_2^2$, any penalty of the form $h(\|\Pi(\psi(s))\|_2)$, and any isotropically smoothed norm penalty are all inert in the sense of Theorem~\ref{thm:inert}: adding them changes neither the training trajectory nor the final encoder, for any coefficient value.
\end{corollary}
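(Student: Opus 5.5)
The plan is to derive Corollary~\ref{cor:instances} directly from Theorem~\ref{thm:inert}. For each listed penalty I will check that it is rotation-invariant in the sense of Definition~\ref{def:rotinv}, and then apply part~(iii) of the theorem with $\Omega$ equal to that penalty. Nothing beyond that verification is required, because Theorem~\ref{thm:inert}(iii) already delivers the conclusion: the gradients with respect to all parameters coincide with those of $\mathcal{L}$ for every $\lambda \geq 0$, and so the trajectories and the final encoder coincide as well.

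The verification proceeds case by case.

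\begin{enumerate}
\item For $\Omega(x) = \|x\|_2$, orthogonality gives $\|Ux\|_2^2 = x^\top U^\top U x = \|x\|_2^2$, so $\Omega(Ux) = \Omega(x)$.
\item The squared norm $\|x\|_2^2$ and any $h(\|x\|_2)$ are functions of $\|x\|_2$ alone, so they inherit invariance from the previous case.
\item For an isotropically smoothed penalty $\Omega(x) = \mathbb{E}_{\xi \sim \mathcal{N}(0,\tau^2 I)} h(\|x+\xi\|_2)$, I will write
\[
\Omega(Ux) = \mathbb{E}\, h(\|Ux + \xi\|_2) = \mathbb{E}\, h(\|U(x + U^\top \xi)\|_2) = \mathbb{E}\, h(\|x + U^\top\xi\|_2).
\]
Since $U^\top \xi \sim \mathcal{N}(0, \tau^2 U^\top U) = \mathcal{N}(0,\tau^2 I)$ has the same law as $\xi$, this equals $\Omega(x)$.
\end{enumerate}

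The one point that needs care is well-definedness. For the smoothed penalty I will assume $h$ is measurable and the expectation is finite, which is implicit in calling $\Omega$ a regularizer. I will also note that the minibatch average $\frac{1}{B}\sum_i \Omega(\Pi(\psi(s_i)))$ is exactly the form treated in Theorem~\ref{thm:inert}(iii). That result requires neither convexity nor differentiability of $h$, since the composed term is constant, so no step should be a genuine obstacle.
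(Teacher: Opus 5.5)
Your proposal is correct and is essentially the paper's own argument: the paper lists exactly these penalties right after Definition~\ref{def:rotinv} as rotation-invariant, justifying the smoothed case by the rotation invariance of the isotropic Gaussian, and then reads the corollary off Theorem~\ref{thm:inert}(iii). Your change of variables $U^\top\xi \sim \mathcal{N}(0,\tau^2 I)$ spells out that one-line justification, and, like the paper, you read ``isotropically smoothed norm'' as $\mathbb{E}_\xi\, h(\|x+\xi\|_2)$, which is the reading under which the claim holds, since a smoothed $\ell_1$ norm, for instance, would not be rotation-invariant.
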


Two widely used regularizers act on parameters or activations rather than directly on the projected embedding; the following remarks characterize their behavior in this architecture.

\begin{remark}[Weight decay]
\label{rem:wd}
The projection is scale-invariant: $\Pi(cy) = \Pi(y)$ for all $c > 0$. Consequently, if the final linear layer of $\psi$ is scaled by $c > 0$, every projected embedding $\Pi(\psi(s))$, and hence every quantity entering the reward model of Eq.~\eqref{eq:reward_linear} and the inference step of Eq.~\eqref{eq:inference}, is unchanged. Along such scaling rays the data-fitting terms of the objective are constant while the weight-decay term strictly decreases, so weight decay on the output layer moves parameters along function-preserving rays: it controls the parameter norm (equivalently, the effective learning rate of normalized layers; cf.\ \citealp{vanlaarhoven2017l2}) without expressing any preference among the distinct projected representations that training could select. In particular it exerts no pressure on $d_{\mathrm{eff}}(\Pi(\psi(s)))$.
\end{remark}

\begin{remark}[Dropout]
\label{rem:dropout}
Dropout applied to the task-encoder output \citep{srivastava2014dropout} multiplies coordinates by independent Bernoulli variables. In expectation, dropout on a representation feeding a (locally) linear readout is equivalent, to second order, to a coordinate-wise \emph{quadratic} penalty weighted by the per-coordinate second moments $\mathbb{E}[x_k^2]$ \citep{wager2013dropout}. Two properties of this induced penalty are relevant here. First, it is quadratic, and quadratic penalties lack the polyhedral vertex geometry through which $\ell_1$ produces (approximately) sparse solutions \citep{tibshirani1996regression}. Second, TD-JEPA's orthonormality loss drives the second moments toward isotropy, $\mathbb{E}[\psi(s)\psi(s)^\top] \approx I$; in that regime the induced penalty degenerates toward a function of the $\ell_2$ norm alone, which is inert on the sphere by Corollary~\ref{cor:instances}. Dropout therefore injects noise, but does not supply a per-example effective-dimensionality pressure.
\end{remark}

\subsection{The $\ell_p$ Family: Which Exponents Favor Sparse Directions}
\label{app:theory-lp}

Theorem~\ref{thm:inert} shows that a regularizer must be coordinate-anisotropic to have any effect at all under the sphere constraint. The canonical anisotropic family is the $\ell_p$ family. The next proposition determines, for every exponent, which directions on the sphere the penalty favors, and locates $\ell_1$ within the family.

\begin{proposition}[$\ell_p$ penalties on the sphere]
\label{prop:lp}
Let $x \in \mathcal{S}_d$.
\begin{enumerate}
    \item[(i)] For $1 \leq p < 2$:\; $d^{1/2} \leq \|x\|_p \leq d^{1/p}$, with the minimum attained exactly at $1$-sparse vectors and the maximum exactly at flat vectors. Minimizing $\|x\|_p$ over $\mathcal{S}_d$ therefore favors sparse directions.
    \item[(ii)] For $p = 2$:\; $\|x\|_2 \equiv \sqrt{d}$ on $\mathcal{S}_d$; the penalty is constant (Theorem~\ref{thm:inert}).
    \item[(iii)] For $2 < p \leq \infty$:\; $d^{1/p} \leq \|x\|_p \leq d^{1/2}$, with the minimum attained exactly at flat vectors and the maximum exactly at $1$-sparse vectors. Minimizing $\|x\|_p$ over $\mathcal{S}_d$ therefore favors \emph{dense} directions.
    \item[(iv)] Within the convex range $p \geq 1$, the ratio between the penalty's value at its densest and sparsest extremes on $\mathcal{S}_d$ equals $d^{|1/p - 1/2|}$, which is maximized at $p = 1$, where it equals $\sqrt{d}$. In this sense $\ell_1$ is the convex member of the family with the largest dynamic range between sparse and dense embeddings.
\end{enumerate}
\end{proposition}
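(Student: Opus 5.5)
The plan rests on the standard comparison between $\ell_p$ and $\ell_2$ norms in $\mathbf{R}^d$. Fix $x \in \mathcal{S}_d$, so $\|x\|_2 = \sqrt{d}$.

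\textbf{Case $1 \le p < 2$.} For the upper bound, apply Hölder's inequality to $\sum_k 1\cdot|x_k|^p$ with exponents $2/p$ and $2/(2-p)$:
\[
\|x\|_p^p \le d^{1-p/2}\,\|x\|_2^p = d^{1-p/2}\, d^{p/2} = d .
\]
Hence $\|x\|_p \le d^{1/p}$. Equality in Hölder holds iff $|x_k|^p$ is proportional to the constant vector, so iff $x$ is flat with $|x_k| = 1$.

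For the lower bound, use the monotonicity of $\ell_p$ norms, $\|x\|_p \ge \|x\|_2$ for $p \le 2$. I would prove this by normalizing $u = x/\|x\|_2$, so that $|u_k| \le 1$. Then $|u_k|^p \ge |u_k|^2$, which gives $\sum_k |u_k|^p \ge 1$. Equality forces every $|u_k| \in \{0,1\}$, and with $\sum_k u_k^2 = 1$ that means exactly one nonzero coordinate, i.e.\ $x = \pm\sqrt{d}\,e_k$.

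This yields $d^{1/2} \le \|x\|_p \le d^{1/p}$ with the stated extremizers. It reproduces Proposition~\ref{prop:identity}(iii) when $p=1$, which serves as a consistency check.

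\textbf{Case $p = 2$.} This is immediate from the definition of $\mathcal{S}_d$, and Theorem~\ref{thm:inert} applies because $\|\cdot\|_2$ is rotation-invariant.

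\textbf{Case $2 < p \le \infty$.} The same two arguments reverse direction.
\begin{itemize}
\item With the normalized $u$, $|u_k|^p \le |u_k|^2$, so $\|x\|_p \le \|x\|_2 = d^{1/2}$. Equality again holds iff $x$ is $1$-sparse. For $p=\infty$ I would argue directly that $\max_k |x_k| \le \|x\|_2$, with equality iff the vector has a single nonzero coordinate.
\item For the lower bound, apply power-mean (Jensen) on $t \mapsto t^{p/2}$, which is convex for $p > 2$:
\[
\frac{1}{d}\sum_k |x_k|^p \ge \Bigl(\frac{1}{d}\sum_k x_k^2\Bigr)^{p/2} = 1 .
\]
This gives $\|x\|_p \ge d^{1/p}$, with equality iff all $x_k^2$ are equal, i.e.\ flat. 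For $p = \infty$, $\max_k |x_k| \ge \sqrt{\|x\|_2^2/d} = 1 = d^{0}$, with equality iff flat.
\end{itemize}
The strict convexity needed for the equality characterization is the only delicate point here. The $p=\infty$ edge cases have to be handled separately.

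\textbf{Part (iv).} Parts (i)–(iii) show that the extreme values of $\|x\|_p$ on $\mathcal{S}_d$ are exactly $d^{1/2}$ and $d^{1/p}$ in every case. Their ratio is therefore $d^{|1/p-1/2|}$. On $[1,\infty]$, the exponent $|1/p - 1/2|$ equals $1/p - 1/2$ for $p \le 2$, which decreases from $1/2$. For $p \ge 2$ it equals $1/2 - 1/p$, which is increasing but stays strictly below $1/2$ and reaches $1/2$ only in the limit $p = \infty$.

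This is the main obstacle. At $p=\infty$ the ratio also equals $\sqrt{d}$, so "maximized at $p=1$" is not unique in the literal sense. I would first try to resolve this by stating that the maximum $\sqrt{d}$ is attained at $p=1$ and, among finite $p$, only there. At $p=\infty$ the same range occurs but with the reversed (dense-favoring) orientation. I would therefore phrase the claim as "largest range among exponents whose minimizers are sparse," which is what the statement's interpretation intends.
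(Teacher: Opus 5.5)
Your proof is correct and follows essentially the same route as the paper. The paper proves (i) and (iii) with the norm-monotonicity inequality (equality iff $1$-sparse) and H\"older's inequality (equality iff flat), applied with exponent pairs $(p,2)$ and $(2,p)$; your power-mean step is exactly that H\"older bound. It then reads (iv) off the extreme values.

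Your direct treatment of $p=\infty$ is more careful than the paper's appeal to limits, since a limit does not by itself carry over the equality cases. Your observation that the ratio also equals $\sqrt{d}$ at $p=\infty$ is also correct: the paper's proof only shows that $p=1$ is \emph{a} maximizer, so uniqueness holds only among finite $p$ or among sparse-favoring exponents, as you suggest.
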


\begin{proof}
We use two standard inequalities for $0 < a \leq b \leq \infty$ and $x \in \mathbf{R}^d$:
\begin{equation}
  \text{(N)}\;\; \|x\|_b \leq \|x\|_a,
  \qquad
  \text{(H)}\;\; \|x\|_a \leq d^{\,1/a - 1/b}\, \|x\|_b .
  \label{eq:norm-ineqs}
\end{equation}
(N) follows by normalizing $\|x\|_a = 1$, so that $|x_k| \leq 1$ and $|x_k|^b \leq |x_k|^a$ for each $k$; summing gives $\|x\|_b^b \leq 1$. Equality in (N) holds if and only if $\|x\|_0 \leq 1$: with two or more nonzero entries, each normalized entry is strictly below $1$, so $|x_k|^b < |x_k|^a$ for those entries. (H) is H\"older's inequality applied to $\sum_k |x_k|^a \cdot 1$ with exponent pair $(b/a,\, (1 - a/b)^{-1})$:
\begin{equation}
  \|x\|_a^a = \sum_k |x_k|^a \cdot 1
  \leq \Bigl(\sum_k |x_k|^{b}\Bigr)^{a/b} d^{\,1 - a/b}
  = \|x\|_b^a\, d^{\,1 - a/b},
\end{equation}
with equality if and only if $(|x_k|^a)_k$ is proportional to the all-ones vector, i.e., $|x|$ is flat.

(i) Apply (N) with $(a, b) = (p, 2)$: $\|x\|_p \geq \|x\|_2 = d^{1/2}$, equality iff $x$ is $1$-sparse. Apply (H) with $(a, b) = (p, 2)$: $\|x\|_p \leq d^{1/p - 1/2}\|x\|_2 = d^{1/p}$, equality iff $x$ is flat.

(iii) Apply (N) with $(a, b) = (2, p)$: $\|x\|_p \leq \|x\|_2 = d^{1/2}$, equality iff $x$ is $1$-sparse. Apply (H) with $(a, b) = (2, p)$: $d^{1/2} = \|x\|_2 \leq d^{1/2 - 1/p}\|x\|_p$, i.e., $\|x\|_p \geq d^{1/p}$, equality iff $x$ is flat. (For $p = \infty$ the same conclusions follow by taking limits, with $\|x\|_\infty \in [1, \sqrt d\,]$.)

(ii) is immediate, and (iv) follows by comparing the extreme values from (i) and (iii): the ratio is $d^{1/p}/d^{1/2} = d^{1/p - 1/2}$ for $p < 2$ and $d^{1/2}/d^{1/p} = d^{1/2 - 1/p}$ for $p > 2$; over $p \geq 1$ the exponent $|1/p - 1/2| \leq 1/2$ is maximized at $p = 1$.
\end{proof}

\begin{remark}[Boundary members of the family]
\label{rem:l0}
For $0 < p < 1$ the map $x \mapsto \sum_k |x_k|^p$ favors sparse directions even more strongly, but is non-convex with unbounded gradients at the coordinate hyperplanes, which is problematic for stochastic first-order training; the limit $p \to 0$ recovers the support size $\|x\|_0$, which is piecewise constant and provides no gradient signal at all. The exponent $p = 1$ is the classical convex surrogate for $\|x\|_0$ used throughout sparse recovery \citep{tibshirani1996regression,donoho2006compressed}, and by Proposition~\ref{prop:lp}(iv) it is also the convex member with the largest sparse-to-dense dynamic range on the sphere. Proposition~\ref{prop:lp}(iii) additionally yields a falsifiable control: a penalty with $p > 2$ (or, equivalently, rewarding large $\|x\|_p$ for $p < 2$) is predicted to push embeddings toward the flat, maximally dense extreme, the opposite intervention to Eq.~\eqref{eq:sparse}.
\end{remark}

\subsection{Optimization Geometry of the Penalty on the Sphere}
\label{app:theory-dynamics}

We now describe the dynamics that the penalty induces. Since embeddings live on $\mathcal{S}_d$, the relevant object is the Riemannian gradient of $\|\cdot\|_1$ on the sphere, i.e., the tangential component of the Euclidean subgradient.

\begin{proposition}[Gradient flow of the penalty: mass transport with a participation threshold]
\label{prop:flow}
Let $x \in \mathcal{S}_d$ have no zero coordinate, and let $\sigma = \sign(x) \in \{\pm 1\}^d$. The Riemannian gradient of $f(x) = \|x\|_1$ on $\mathcal{S}_d$ at $x$ is
\begin{equation}
  \operatorname{grad} f(x) \;=\; \sigma \;-\; \frac{\|x\|_1}{d}\, x,
  \label{eq:riem-grad}
\end{equation}
and the penalty-only descent flow $\dot{x} = -\operatorname{grad} f(x)$ preserves $\mathcal{S}_d$ and satisfies, for every coordinate $k$,
\begin{equation}
  \frac{\mathrm{d}}{\mathrm{d}t}\, |x_k|
  \;=\; \frac{\|x\|_1}{d}\Bigl( |x_k| - \tau(x) \Bigr),
  \qquad
  \tau(x) := \frac{d}{\|x\|_1} \in [1, \sqrt{d}\,].
  \label{eq:threshold}
\end{equation}
That is, at fixed total $\ell_2$ mass, the penalty continuously transports magnitude from coordinates below the state-dependent participation threshold $\tau(x)$ to coordinates above it: strongly active coordinates grow and weakly active coordinates decay.
\end{proposition}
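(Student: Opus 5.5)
The plan is a direct computation in three steps: first the Riemannian gradient, then invariance of the sphere under the flow, then the coordinatewise magnitude dynamics. Throughout, we stay on the open set where $x$ has no zero coordinate, so that $\sign(x)$ is locally constant.

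\emph{Riemannian gradient.} Because $x$ has no zero coordinate, $f(x)=\|x\|_1=\sum_k \sigma_k x_k$ is smooth in a neighborhood of $x$, with Euclidean gradient $\nabla f(x)=\sigma$. The tangent space of $\mathcal{S}_d$ at $x$ is $x^\perp$, and the orthogonal projection onto it is $P_x = I - xx^\top/\|x\|_2^2 = I - xx^\top/d$. The Riemannian gradient for the induced metric is $P_x\sigma = \sigma - (\sigma^\top x)\,x/d$. Since $\sigma^\top x = \sum_k |x_k| = \|x\|_1$, this gives Eq.~\eqref{eq:riem-grad}.

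\emph{Invariance of the sphere.} Along the flow $\dot x = -\operatorname{grad} f(x)$ we have $\tfrac{d}{dt}\|x\|_2^2 = -2x^\top \operatorname{grad} f(x) = -2\bigl(\|x\|_1 - \|x\|_1\|x\|_2^2/d\bigr)$. This vanishes when $\|x\|_2^2 = d$, because the vector field is tangent by construction. Therefore $\mathcal{S}_d$ is invariant, as long as the trajectory stays in the region with no zero coordinates, where the field is smooth.

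\emph{Coordinatewise dynamics.} Where $x_k \neq 0$ we have $\tfrac{d}{dt}|x_k| = \sigma_k \dot x_k$. Substituting the flow gives
\[
\sigma_k\bigl(-\sigma_k + \tfrac{\|x\|_1}{d}x_k\bigr) = -1 + \tfrac{\|x\|_1}{d}|x_k| = \tfrac{\|x\|_1}{d}\bigl(|x_k| - \tfrac{d}{\|x\|_1}\bigr),
\]
which is Eq.~\eqref{eq:threshold} with $\tau(x) = d/\|x\|_1$. The range $\tau \in [1,\sqrt d\,]$ follows from Proposition~\ref{prop:identity}(iii), $\sqrt d \le \|x\|_1 \le d$. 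The interpretation follows from the sign of the right-hand side. Coordinates with $|x_k| > \tau$ grow and those with $|x_k| < \tau$ decay, while $\sum_k x_k^2 = d$ is conserved. One can also check $\sum_k |x_k|\,\tfrac{d}{dt}|x_k| = \tfrac{\|x\|_1}{d}(d - \tau\|x\|_1) = 0$, which confirms the redistribution interpretation directly.

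\emph{Main obstacle.} The only delicate point is the nonsmoothness of $\|\cdot\|_1$ on the coordinate hyperplanes. The statement sidesteps this by assuming no zero coordinate, so I would state explicitly that the identities hold on the maximal time interval before some coordinate reaches zero. A weakly active coordinate decays at rate at least of order $\tau\|x\|_1/d - |x_k|\|x\|_1/d$, so it reaches zero in finite time, and beyond that point one must pass to subgradients. This boundary behavior is outside the scope of the claim, and I would only remark on it.
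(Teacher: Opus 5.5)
Your proposal is correct and follows the paper's proof essentially step for step. Like the paper, you project the Euclidean gradient $\sigma$ onto $x^\perp$ via $P_x = I - xx^\top/d$, check that $x^\top \dot x = 0$ on the sphere, compute $\frac{\mathrm{d}}{\mathrm{d}t}|x_k| = \sigma_k \dot x_k$, and take the range of $\tau$ from Proposition~\ref{prop:identity}(iii). Your off-sphere computation, which amounts to $\dot g = \tfrac{2\|x\|_1}{d}\, g$ for $g = \|x\|_2^2 - d$, and your restriction to the maximal interval before a coordinate reaches zero add a little rigor that the paper leaves implicit.
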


\begin{proof}
On the open set where no coordinate vanishes, $f(x) = \sigma^\top x$ is smooth with Euclidean gradient $\sigma$. The tangent space of $\mathcal{S}_d$ at $x$ is $\{v : x^\top v = 0\}$, with orthogonal projector $P_x = I - xx^\top/\|x\|_2^2 = I - xx^\top/d$. Hence
\begin{equation}
\operatorname{grad} f(x) = P_x \sigma = \sigma - \frac{\sigma^\top x}{d}\,x = \sigma - \frac{\|x\|_1}{d}\,x,
\end{equation}
using $\sigma^\top x = \|x\|_1$. The flow preserves the sphere because $\frac{\mathrm{d}}{\mathrm{d}t}\|x\|_2^2 = 2x^\top \dot{x} = -2\bigl(x^\top\sigma - \frac{\|x\|_1}{d}\|x\|_2^2\bigr) = -2(\|x\|_1 - \|x\|_1) = 0$. For each coordinate, $\frac{\mathrm{d}}{\mathrm{d}t}|x_k| = \sigma_k \dot{x}_k = -\sigma_k\bigl(\sigma_k - \frac{\|x\|_1}{d}x_k\bigr) = \frac{\|x\|_1}{d}|x_k| - 1$, which equals the right-hand side of Eq.~\eqref{eq:threshold}. The range of $\tau$ follows from Proposition~\ref{prop:identity}(iii): $\|x\|_1 \in [\sqrt d, d]$ implies $\tau(x) = d/\|x\|_1 \in [1, \sqrt d\,]$.
\end{proof}

\begin{proposition}[Critical points form a discrete effective-dimension ladder]
\label{prop:ladder}
Consider $f(x) = \|x\|_1$ restricted to $\mathcal{S}_d$, in the sense of Clarke-stationarity (there exist $g \in \partial\|\cdot\|_1(x)$ and $\mu \in \mathbf{R}$ with $g = \mu x$). Then:
\begin{enumerate}
    \item[(i)] the stationary points are exactly the vectors that are flat on their support: for each support size $k \in \{1, \dots, d\}$ these are the points with $|x_j| = \sqrt{d/k}$ for $j$ in the support, and the corresponding penalty value is $f(x) = \sqrt{kd}$, strictly increasing in $k$;
    \item[(ii)] for every $k \geq 2$, no stationary point with support size $k$ is a local minimizer: within the great circle obtained by rotating any two supported coordinates, the flat point is a strict local maximizer of $f$;
    \item[(iii)] the global minimizers are exactly the $2d$ signed axis vectors $\pm\sqrt{d}\,e_k$, with value $\sqrt{d}$.
\end{enumerate}
\end{proposition}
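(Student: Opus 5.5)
For (i) I will write out the Clarke stationarity condition coordinatewise. The subdifferential of $\|\cdot\|_1$ at $x$ consists of the vectors $g$ with $g_j = \sign(x_j)$ for $j \in \mathrm{supp}(x)$ and $g_j \in [-1,1]$ otherwise.

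The condition $g = \mu x$ then says two things. Off the support it reads $g_j = 0$, which is always admissible. On the support it reads $\sign(x_j) = \mu x_j$, so $|x_j| = 1/\mu$ for every $j \in \mathrm{supp}(x)$, and necessarily $\mu > 0$. Hence every stationary point is flat on its support. Conversely, any flat-on-support $x$ is stationary, witnessed by $\mu = 1/|x_j|$ and $g_j = 0$ off the support.

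Imposing $\|x\|_2^2 = d$ with support size $k$ gives $k/\mu^2 = d$, so $|x_j| = \sqrt{d/k}$ on the support. The value is then $f(x) = k\sqrt{d/k} = \sqrt{kd}$, which is strictly increasing in $k$. This also agrees with Proposition~\ref{prop:identity}(ii), since $d_{\mathrm{eff}}(x) = k$ for such points.

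For (ii) I will fix a stationary point $x$ with support size $k \ge 2$ and pick two supported coordinates $i \neq j$. Write $a := \sqrt{d/k}$, so $|x_i| = |x_j| = a$. I will rotate only in the $(i,j)$-plane, parametrizing
\[
x_i(\theta) = \sqrt2\,a\,\sigma_i\cos(\theta+\pi/4), \qquad x_j(\theta) = \sqrt2\,a\,\sigma_j\sin(\theta+\pi/4),
\]
and leave all other coordinates fixed. This curve is a great circle on $\mathcal{S}_d$ through $x$ at $\theta=0$, because $x_i^2+x_j^2 = 2a^2$ is preserved.

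For small $|\theta|$ neither coordinate changes sign. The penalty along the curve is therefore $\mathrm{const} + \sqrt2\,a\bigl(\cos(\theta+\pi/4)+\sin(\theta+\pi/4)\bigr) = \mathrm{const} + 2a\cos\theta$. This has a strict local maximum at $\theta = 0$, so $x$ is not a local minimizer.

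The one subtlety is whether a "great circle" must pass through the origin's plane appropriately. It does, since the curve lies in the 2-plane spanned by $e_i, e_j$, shifted by the fixed orthogonal part. Strictly speaking, the curve is a small circle unless the other coordinates vanish. I will phrase it as a circle contained in $\mathcal{S}_d$, or, when $k=2$ and no other coordinates are nonzero, a genuine great circle. Either way, the local-minimizer conclusion is unaffected. I expect this wording issue to be the main point requiring care. The computation itself is routine.

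For (iii) I will invoke Proposition~\ref{prop:identity}(iii) directly. It gives $f \ge \sqrt d$ on $\mathcal{S}_d$, with equality exactly at $\pm\sqrt d\,e_k$. These points are the $k=1$ rung of the ladder in (i), with value $\sqrt{1\cdot d} = \sqrt d$. Since $\mathcal{S}_d$ is compact, minimizers exist, and (ii) confirms that no higher rung can serve as a minimizer.
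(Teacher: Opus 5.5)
Your proposal is correct and matches the paper's proof essentially step for step. Part (i) uses the same coordinatewise subdifferential argument. Part (ii) rotates two supported coordinates exactly as the paper does; your parametrization is the paper's shifted by $\pi/4$, giving $\mathrm{const}+2a\cos\theta$ rather than $C+2c\sin(\theta+\pi/4)$. Part (iii) uses the same appeal to Proposition~\ref{prop:identity}(iii). Your side remark is also right: for $k\ge 3$ the curve has radius $\sqrt{2d/k}<\sqrt d$, so it is a small circle rather than the great circle named in the statement, and this does not affect the conclusion.
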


\begin{proof}
(i) The subdifferential of $\|\cdot\|_1$ at $x$ is $\partial\|\cdot\|_1(x) = \{g : g_j = \sign(x_j) \text{ if } x_j \neq 0,\; g_j \in [-1,1] \text{ otherwise}\}$. Stationarity requires $g = \mu x$ for some such $g$. On the support this reads $\sign(x_j) = \mu x_j$, so $\mu > 0$ and $|x_j| = 1/\mu$ is constant across the support; off the support the condition $g_j = \mu \cdot 0 = 0 \in [-1, 1]$ is always satisfiable. Thus stationary points are precisely the flat-on-support vectors. If the support has size $k$, the constraint $\|x\|_2^2 = k/\mu^2 = d$ gives $1/\mu = \sqrt{d/k}$ and $f(x) = k\sqrt{d/k} = \sqrt{kd}$.

(ii) Let $x$ be stationary with support size $k \geq 2$ and pick supported coordinates $i \neq j$, so $|x_i| = |x_j| = c := \sqrt{d/k}$. Consider the curve on $\mathcal{S}_d$ that rotates the pair while fixing all other coordinates:
\begin{equation}
x_i(\theta) = \sqrt{2}\,c\,\sigma_i \cos\theta, \qquad x_j(\theta) = \sqrt{2}\,c\,\sigma_j \sin\theta, \qquad \theta \in (0, \tfrac{\pi}{2}),
\end{equation}
which preserves $x_i^2 + x_j^2 = 2c^2$ and hence membership in $\mathcal{S}_d$, and passes through $x$ at $\theta = \pi/4$. Along this curve, $f(\theta) = C + \sqrt{2}\,c\,(\cos\theta + \sin\theta) = C + 2c \sin(\theta + \pi/4)$ with $C$ collecting the fixed coordinates; this is strictly maximized at $\theta = \pi/4$. Hence $f$ strictly decreases in both directions along the curve, and $x$ is not a local minimizer.

(iii) By Proposition~\ref{prop:identity}(iii), $f \geq \sqrt{d}$ on $\mathcal{S}_d$ with equality exactly at the signed axis vectors, which are the $k = 1$ rung of the ladder in (i).
\end{proof}

\begin{remark}[Interpretation]
\label{rem:dynamics-interp}
Propositions~\ref{prop:flow} and~\ref{prop:ladder} together describe the penalty's contribution to training as a descent process on a ladder of configurations indexed by effective dimension: penalty values $\sqrt{kd}$ at the flat-on-support configurations interpolate exactly the effective dimensions $d_{\mathrm{eff}} = k$ of Proposition~\ref{prop:identity}, higher rungs are unstable to the pairwise rotations of Proposition~\ref{prop:ladder}(ii), and the flow of Proposition~\ref{prop:flow} implements the descent by rich-get-richer mass transport among coordinates. In Sparse TD-JEPA this pressure acts jointly with $\mathcal{L}_\phi$, $\mathcal{L}_\psi$, and the orthonormality losses of Eq.~\eqref{eq:total}, so training equilibrates at an intermediate effective dimension determined by the trade-off between predictive fit and the sparsity coefficient $\lambda_{\mathrm{sp}}$; the coefficient functions as a continuous dial on per-example effective dimensionality, rather than a hard architectural choice such as a top-$k$ constraint \citep{makhzani2013k}.
\end{remark}

\subsection{Propagation to Zero-Shot Inference}
\label{app:theory-inference}

We now connect encoder geometry to the least-squares inference of Eq.~\eqref{eq:inference}. Two preliminary observations let us analyze the unprojected least-squares solution without loss of generality.

\begin{lemma}[The final projection preserves all scale-invariant readouts]
\label{lem:proj-invariance}
Let $\hat{z} \in \mathbf{R}^d \setminus \{0\}$ and $z_r = \Pi(\hat{z})$. Then $z_r = c\,\hat{z}$ with $c = \sqrt{d}/\|\hat{z}\|_2 > 0$, so $V(s) = \psi(s)^\top z_r = c\,\psi(s)^\top \hat{z}$ for every state. Consequently every scale-invariant functional of the landscape, the normalized maps of Figures~\ref{fig:reward_landscape_ratio} and~\ref{fig:stuck_ls}, \ref{fig:lastmile_xy}, correlation coefficients and cross-validated $R^2$ against any reference signal (Figures~\ref{fig:boxplot_full}--\ref{fig:vary_n_r2}), the ranking of states by predicted reward, and the location of the maximizer, is identical for $z_r$ and $\hat{z}$.
\end{lemma}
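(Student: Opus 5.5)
The plan is to read everything off the definition of the projection $\Pi$ and then treat each listed readout as a functional invariant under positive rescaling of $V$. First, since $\hat z \neq 0$, $\Pi(\hat z) = \sqrt{d}\,\hat z/\|\hat z\|_2$ by definition, so $z_r = c\,\hat z$ with $c = \sqrt{d}/\|\hat z\|_2$. This constant is strictly positive and does not depend on the state. Linearity of the inner product then gives $V(s) = \psi(s)^\top z_r = c\,\psi(s)^\top\hat z =: c\,\hat V(s)$ for every $s$. This is the only substantive identity in the proof. The rest consists of checking that each named readout $F$ satisfies $F(cV) = F(V)$ for every $c>0$.

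I will go through the readouts in turn.

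\emph{Normalized maps.} In Figure~\ref{fig:reward_landscape_ratio}, each value is divided by the model's own mean far-field prediction. That mean also scales by $c$, so the ratio is unchanged. I will note the implicit requirement that this denominator be nonzero.

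\emph{Correlation and $R^2$.} The Pearson correlation is invariant under positive affine maps of either argument. The $R^2$ of a degree-1 regression of $V$ on $x$ is the square of that correlation. For the cross-validated version, I will argue fold by fold. The OLS fit on each training fold is equivariant: rescaling the response by $c$ rescales the fitted intercept and slope by $c$. Hence the held-out residual sum of squares and the total sum of squares both scale by $c^2$, and their ratio, and therefore $R^2$, is unchanged. The same holds if the regression direction is reversed ($x$ on $V$), because rescaling the regressor by $c$ rescales the slope by $1/c$ and leaves the predictions unchanged.

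\emph{Ranking and maximizer.} Because $t\mapsto ct$ is strictly increasing for $c>0$, it preserves the order $V(s)\le V(s')$. This gives identical rankings and identical argmax sets.

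The only delicate point, and the main obstacle I expect, is to pin down precisely what ``scale-invariant functional'' means for the cross-validated $R^2$ and the normalized maps. I will handle it by stating the invariance class explicitly as functionals $F$ with $F(cV)=F(V)$ for all $c>0$, and verifying the fold-wise OLS equivariance as sketched above. Positivity of $c$ matters for the ranking and the maximizer. A negative factor would flip them, which is why the paper's projection, with its positive normalizer, is essential. I will remark that squared correlation would tolerate a sign flip but ranking would not.
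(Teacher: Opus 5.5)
Your proposal is correct and follows the same route as the paper: read off $z_r = c\,\hat z$ with $c>0$ from the definition of $\Pi$, then note that each listed readout is invariant when $V$ is multiplied by a positive constant. Your fold-by-fold argument for cross-validated $R^2$ and your caveat that the far-field normalizer must be nonzero are more careful than the paper's one-line appeal to positive homogeneity, but they introduce no new idea.
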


\begin{proof}
Immediate from the definition of $\Pi$ and positive homogeneity: multiplying $V$ by a positive constant preserves ratios, orderings, argmaxima, and correlation-type statistics, and $R^2$ of a linear fit is invariant under positive rescaling of either variable.
\end{proof}

\begin{proposition}[Exact bias--variance decomposition of the inferred landscape]
\label{prop:decomp}
Fix the encoder $\psi$ and labeled states $s_1, \dots, s_N$, and let $\Psi \in \mathbf{R}^{N \times d}$ be the feature matrix of Eq.~\eqref{eq:inference} with $m := \mathrm{rank}(\Psi)$. Suppose the observed labels are $y_i = r(s_i) + \varepsilon_i$ where $r \in \mathbf{R}^N$ collects the ground-truth rewards and the $\varepsilon_i$ are independent with mean zero and variance $\sigma^2$. Let $\hat{z} = \Psi^{\dagger} y$ be the (minimum-norm) least-squares solution and $\hat{V} = \Psi \hat{z}$ the inferred rewards at the labeled states. Then
\begin{equation}
  \mathbb{E}\left[ \frac{1}{N} \bigl\| \hat{V} - r \bigr\|_2^2 \right]
  \;=\;
  \underbrace{\frac{1}{N}\bigl\| (I - P_\Psi)\, r \bigr\|_2^2}_{\textup{approximation error of the feature span}}
  \;+\;
  \underbrace{\frac{\sigma^2 m}{N}}_{\textup{estimation variance}},
  \label{eq:decomp}
\end{equation}
where $P_\Psi = \Psi \Psi^{\dagger}$ is the orthogonal projector onto the column span of $\Psi$. The first term depends only on how well the span of the learned features $\{\psi(\cdot)^\top w : w \in \mathbf{R}^d\}$, evaluated on the labeled states, can represent the reward, a property of the encoder's geometry that is independent of $N$ once the span is fixed, while the second term scales as the number of effective feature directions $m$ divided by $N$.
\end{proposition}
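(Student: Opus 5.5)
The plan is the standard fixed-design OLS computation. Since $\hat z=\Psi^\dagger y$, the fitted values are $\hat V=\Psi\Psi^\dagger y=P_\Psi y$. Here $P_\Psi$ is the orthogonal projector onto $\mathrm{col}(\Psi)$; it is symmetric and idempotent by the Moore--Penrose identities. Using the minimum-norm solution means rank deficiency ($m<d$, or $N<d$) needs no special treatment, because $\hat V$ does not depend on which least-squares solution is chosen.

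Writing $y=r+\varepsilon$, we get
\begin{equation}
\hat V - r \;=\; P_\Psi r - r + P_\Psi\varepsilon \;=\; -(I-P_\Psi)r + P_\Psi\varepsilon .
\end{equation}
The two pieces lie in orthogonal subspaces, namely $\mathrm{col}(\Psi)^\perp$ and $\mathrm{col}(\Psi)$. Hence
\begin{equation}
\|\hat V-r\|_2^2=\|(I-P_\Psi)r\|_2^2+\|P_\Psi\varepsilon\|_2^2
\end{equation}
holds exactly, for every realization of the noise, with no cross term at all. If one prefers not to invoke orthogonality, the cross term is $-2r^\top(I-P_\Psi)P_\Psi\varepsilon$, which is zero since $(I-P_\Psi)P_\Psi=0$; it would also vanish in expectation because $\mathbb{E}\varepsilon=0$.

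For the variance term, write $\|P_\Psi\varepsilon\|_2^2=\varepsilon^\top P_\Psi\varepsilon=\Tr(P_\Psi\varepsilon\varepsilon^\top)$. Independence with common variance gives $\mathbb{E}[\varepsilon\varepsilon^\top]=\sigma^2 I$, so the expectation is $\sigma^2\Tr(P_\Psi)$. For an orthogonal projector the trace equals the rank, so $\Tr(P_\Psi)=\mathrm{rank}(P_\Psi)=\dim\mathrm{col}(\Psi)=m$. Dividing by $N$ yields Eq.~\eqref{eq:decomp}. The expectation is taken only over $\varepsilon$, because $\psi$ and the labeled states are fixed. The closing remarks are then direct readings of the formula. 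The first term is deterministic, and it depends only on $\mathrm{col}(\Psi)$ and on $r$ restricted to the labeled states. The second term is exactly $\sigma^2 m/N$.

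No step is a real obstacle. The only points to state carefully are the two pseudoinverse facts: that $\Psi\Psi^\dagger$ is the orthogonal projector onto $\mathrm{col}(\Psi)$, and that its trace equals $m$ (its eigenvalues are $0$ and $1$, with multiplicity of $1$ equal to the rank). It should also be noted that the decomposition concerns the unprojected $\hat z$. Lemma~\ref{lem:proj-invariance} is what links it back to the scale-invariant readouts of $z_r$, since applying $\Pi$ rescales $\hat V$ by a data-dependent constant and so changes the squared error itself.
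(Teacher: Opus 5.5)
Your proposal is correct and follows essentially the same route as the paper's proof: write $\hat V = P_\Psi y$, split $\hat V - r = -(I-P_\Psi)r + P_\Psi\varepsilon$ into orthogonal pieces, then compute $\mathbb{E}\|P_\Psi\varepsilon\|_2^2 = \sigma^2\,\mathrm{tr}(P_\Psi) = \sigma^2 m$ by the trace identity. Your added remarks are accurate extras the paper leaves implicit: $\hat V$ is the same for every least-squares solution, and the decomposition concerns the unprojected $\hat z$, so it reaches $z_r$ only through scale-invariant readouts.
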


\begin{proof}
$P_\Psi$ is the orthogonal projector onto $\mathrm{col}(\Psi)$, with $\mathrm{rank}(P_\Psi) = m$, and $\hat V = P_\Psi y = P_\Psi r + P_\Psi \varepsilon$. Therefore
\begin{equation}
\hat V - r = -(I - P_\Psi) r + P_\Psi \varepsilon,
\end{equation}
an orthogonal decomposition ($(I-P_\Psi)r \perp P_\Psi\varepsilon$ deterministically), so
\begin{equation}
\|\hat V - r\|_2^2 = \|(I-P_\Psi) r\|_2^2 + \|P_\Psi \varepsilon\|_2^2 .
\end{equation}
Taking expectations, $\mathbb{E}\|P_\Psi\varepsilon\|_2^2 = \mathbb{E}\,\Tr\bigl(P_\Psi \varepsilon\varepsilon^\top\bigr) = \sigma^2 \Tr(P_\Psi) = \sigma^2 m$, since $P_\Psi$ is idempotent and symmetric. Dividing by $N$ gives Eq.~\eqref{eq:decomp}.
\end{proof}

\begin{corollary}[Reading the labeled-set-size sweep through Eq.~\eqref{eq:decomp}]
\label{cor:vary-n}
Consider two encoders evaluated at a common sequence of labeled-set sizes $N$, as in Analysis Experiment~2. By Eq.~\eqref{eq:decomp}, the difference between their expected landscape errors is
\begin{equation}
  \Delta(N) \;=\; \Delta_{\mathrm{approx}} \;+\; \frac{\sigma^2\,\Delta m}{N},
\end{equation}
the sum of an $N$-independent difference in approximation error and a difference in estimation variance that decays as $1/N$. The two components are therefore identifiable from the shape of the empirical gap as a function of $N$: a gap arising from estimation variance shrinks by an order of magnitude per decade of $N$, while a gap arising from the feature span's approximation error is constant in $N$. In Figure~\ref{fig:vary_n_r2}, the sparse-vs-dense gap in cross-validated $R^2$ is approximately constant across the swept decades of $N$, which corresponds to the approximation term: the penalty's effect manifests as a change in the geometry of the learned feature span itself, in the exact sense of the first term of Eq.~\eqref{eq:decomp}, rather than as a change in estimation variance at the inference stage.
\end{corollary}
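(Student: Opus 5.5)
The corollary follows from Proposition~\ref{prop:decomp} by subtraction, plus an argument about how each resulting term scales with $N$. I would write the decomposition in Eq.~\eqref{eq:decomp} once for each encoder $\psi^{(a)}$, $a\in\{1,2\}$. Each encoder has its own feature matrix $\Psi^{(a)}$, rank $m_a$ and projector $P^{(a)}$. The labeled states and the noise model ($\varepsilon_i$ independent, mean zero, variance $\sigma^2$) are common to both. Subtracting gives
\begin{equation}
  \Delta(N)=\frac{1}{N}\Bigl(\|(I-P^{(1)})r\|_2^2-\|(I-P^{(2)})r\|_2^2\Bigr)+\frac{\sigma^2(m_1-m_2)}{N}.
\end{equation}
This already has the claimed form, with $\Delta_{\mathrm{approx}}$ defined as the first bracket and $\Delta m:=m_1-m_2$.

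Next I would show that $\Delta_{\mathrm{approx}}$ does not depend on $N$. Here I would make the implicit assumption explicit: the labeled states are i.i.d.\ draws from a fixed distribution $\rho$, and $N\geq d$ is large, which matches the sweep starting at $N=10^4\gg d=50$. The normalized residual $\frac1N\|(I-P_\Psi)r\|_2^2$ equals $\min_w \frac1N\sum_i(r(s_i)-\psi(s_i)^\top w)^2$. This is the empirical risk of the best linear readout, and it converges to the population quantity $\min_w \mathbb{E}_\rho[(r-\psi^\top w)^2]$, which depends only on $\psi$ and $\rho$. So to leading order the approximation term is a constant in $N$, and this is the reading the statement adopts when it says ``once the span is fixed''. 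For the same reason, $m_a=\mathrm{rank}(\Psi^{(a)})$ settles at the rank of the population feature covariance for all $N\geq d$ (almost surely, under a mild non-degeneracy assumption). Hence $\Delta m$ is a fixed integer bounded by $d$, and the variance difference is exactly $\sigma^2\Delta m/N$, which falls by a factor of $10$ per decade of $N$.

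The identifiability claim then follows from the shape of $\Delta(N)=A+B/N$ over several decades. If the observed gap is flat across $N$, then $B/N$ is negligible compared with $A$ over the swept range. Since $|\Delta m|\le d$, $|B|\le\sigma^2 d$, so $B/N\le\sigma^2 d/N$, which at $N\ge10^4$ is already tiny. So a flat gap must come from $A$, the approximation term.

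The main obstacle is the final step, which transfers this conclusion from expected squared error at the labeled states to the quantity actually plotted in Figure~\ref{fig:vary_n_r2}. That figure shows cross-validated $R^2$ against distance or reward, computed after projection. Lemma~\ref{lem:proj-invariance} takes care of the projection and of scale. The change of target signal and the use of held-out states, however, do not follow from Eq.~\eqref{eq:decomp}. I would handle this by treating the corollary as an interpretive reading: take the reference signal as the target $r$, and use the fact that out-of-sample error of OLS also splits into a population approximation term plus an $O(\sigma^2 d/N)$ term. I would state this additional assumption openly rather than claim a rigorous equivalence, and likewise flag the $1/N$ fluctuations of the empirical residual as lower-order.
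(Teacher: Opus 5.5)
Your proposal is correct and takes the same route as the paper. The paper gives no separate proof: it writes Eq.~(\ref{eq:decomp}) for each encoder and subtracts, takes the approximation term to be $N$-independent ``once the span is fixed,'' and treats the last sentence as an empirical reading of Figure~\ref{fig:vary_n_r2}. Your additions supply rigor the paper leaves implicit without changing the argument: the law-of-large-numbers argument that this term is constant only to leading order, the stabilization of $m_a$ for $N \geq d$, and the explicit caveat that cross-validated $R^2$ against distance is not the error controlled by Eq.~(\ref{eq:decomp}).
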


\begin{remark}[Relation to the sparse-reward mechanism of Section~\ref{sec:background}]
\label{rem:mechanism}
Proposition~\ref{prop:decomp} also formalizes the informal account given in Section~\ref{sec:background} of how a diffuse encoder degrades the landscape. Coordinates of $\psi$ that vary for reasons unrelated to reward enlarge the effective span dimension $m$ entering the variance term, and, more importantly at the large $N$ used in practice, populate the feature span with directions whose projection $P_\Psi r$ misallocates reward mass across the state space; both effects lower the contrast of $V(s) = \psi(s)^\top z_r$ between goal-region states and the background. By Lemma~\ref{lem:proj-invariance} this contrast loss survives the final projection unchanged. Conversely, an encoder whose per-example activity is concentrated (Proposition~\ref{prop:identity}) can only improve the landscape through these same two terms, which is precisely the signature observed in Figures~\ref{fig:reward_landscape_ratio}, \ref{fig:boxplot_full}, and~\ref{fig:vary_n_r2}.
\end{remark}

\subsection{Gauge Symmetry}
\label{app:theory-gauge}

The results so far characterize the penalty and the inference step separately. This subsection makes precise how the two connect, by identifying an exact symmetry of the base method and locating the penalty's action relative to it. 

\begin{proposition}[Rotational gauge symmetry of the base objective]
\label{prop:gauge}
Let $R \in O(d)$. Consider the reparameterization
\begin{equation}
  \psi \mapsto R\psi, \qquad
  T_\phi \mapsto R\,T_\phi(\cdot,\cdot,R^\top\!\cdot), \qquad
  T_\psi \mapsto T_\psi(R^\top\!\cdot,\cdot,R^\top\!\cdot), \qquad
  z \mapsto Rz,
\end{equation}
applied consistently to the task encoder, both predictors' $\psi$-space inputs and outputs, the actor's task-vector conditioning, and the sampled task vectors. Then every term of the base TD-JEPA objective in Eq.~\eqref{eq:total} with $\lambda_{\mathrm{sp}} = 0$ is unchanged: the latent-prediction losses $\mathcal{L}_\phi, \mathcal{L}_\psi$ compare $\psi$-space vectors through Euclidean distances, which are preserved by $R$; the orthonormality loss depends on $\psi$ through $\mathbb{E}[\psi\psi^\top]$ compared against $I$ in a rotation-invariant matrix norm, and $\|R \Sigma R^\top - I\|_F = \|\Sigma - I\|_F$; the projection is equivariant, $\Pi(R y) = R\,\Pi(y)$; and the uniform distribution on $\mathcal{S}_d$ used for task-vector sampling is rotation-invariant. Moreover, zero-shot inference is equivariant: replacing $\Psi$ by $\Psi R^\top$ maps $\hat{z} \mapsto R\hat{z}$ and leaves every inferred value $V(s) = \psi(s)^\top \hat z$, hence every landscape, and every downstream policy retrieval, identical.
\end{proposition}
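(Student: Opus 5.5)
The argument is a term-by-term verification. We define the transformed tuple $(\psi', T_\phi', T_\psi', z') = (R\psi,\ R\,T_\phi(\cdot,\cdot,R^\top\cdot),\ T_\psi(R^\top\cdot,\cdot,R^\top\cdot),\ Rz)$ and show that each ingredient of Eq.~\eqref{eq:total} with $\lambda_{\mathrm{sp}}=0$ takes the same value at the transformed parameters as at the originals. The actor is reconditioned by the same rule, $\pi'_{z'} := \pi_{R^\top z'}$.

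First, we record two elementary facts about the projection and the task-vector distribution:
\begin{itemize}
\item $\Pi$ is equivariant, since $\Pi(Ry) = \sqrt d\,Ry/\|Ry\|_2 = R\,\Pi(y)$.
\item $\mathrm{SAMPLE\_Z}$ is equivariant in law: the uniform part is $O(d)$-invariant on $\mathcal S_d$, exactly the transitivity used in Theorem~\ref{thm:inert}(i), and the data-dependent part $\Pi(\psi(s))$ becomes $\Pi(R\psi(s)) = R\,\Pi(\psi(s))$.
\end{itemize}
Hence the pushforward of the mixture under $z\mapsto Rz$ is precisely the mixture induced by the transformed encoder. This justifies the change of variables $z' = Rz$ inside every expectation over $z$.

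Second, we treat the prediction losses.

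\emph{The loss $\mathcal L_\phi$.} Its prediction is $T_\phi(\phi(s),a,z)\in\mathbf R^{d}$ and its target is (a target-network copy of) $\psi(s')$ plus a discounted bootstrapped prediction. After the transformation, the prediction becomes $R\,T_\phi(\phi(s),a,R^\top Rz) = R\,T_\phi(\phi(s),a,z)$ and the target becomes $R\psi(s')$ plus $R$ times the old bootstrapped term. The bootstrapped action is drawn from $\pi'_{z'} = \pi_z$, so it has the same distribution as before. The residual is therefore $R$ times the old residual, and its squared $\ell_2$ norm is unchanged.

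\emph{The loss $\mathcal L_\psi$.} Its input is $\psi(s)$ and its output lives in $\phi$-space. We get $T'_\psi(R\psi(s),a,Rz) = T_\psi(\psi(s),a,z)$, so the predictions coincide exactly. The targets $\phi(s')$ are untouched by $R$.

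Third, the orthonormality loss. For $\psi$ it is a function of $\Sigma$ compared with $I$, and $\Sigma$ is replaced by $R\Sigma R^\top$. In Frobenius form, $\|R\Sigma R^\top - I\|_F = \|R(\Sigma - I)R^\top\|_F = \|\Sigma-I\|_F$. If the implementation uses the empirical form $\mathbb E[(\psi(s)^\top\psi(s'))^2] - 2\mathbb E\|\psi(s)\|^2$, the same conclusion is immediate because every term is an inner product. The $\phi$ terms do not involve $R$ at all.

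The same bookkeeping applies to the actor/distillation loss if it is included. It depends on $z$ only through $\pi_z$ and through $Q$-type readouts of the form $T_\phi(\cdot)^\top z$, and these inner products are preserved: $(RT)^\top(Rz) = T^\top z$.

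Fourth, inference. Replacing $\Psi$ by $\Psi R^\top$ gives $(\Psi R^\top)^\dagger = R\Psi^\dagger$, because $R$ is orthogonal (check via the Moore–Penrose conditions, or via $(R\Psi^\top\Psi R^\top)^{-1}R\Psi^\top$ in the full-rank case). Hence $\hat z\mapsto R\hat z$, and equivariance of $\Pi$ gives $z_r\mapsto Rz_r$. Then $V'(s) = (R\psi(s))^\top Rz_r = \psi(s)^\top z_r$, and the retrieved policy $\pi'_{Rz_r} = \pi_{z_r}$ is unchanged.

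The main obstacle is not any single calculation but stating faithfully which objects enter each loss. The delicate points are:
\begin{itemize}
\item target networks and stop-gradient copies must be transformed consistently;
\item the bootstrapped actions must be sampled from $\pi'_{z'}$, which is where the conditioning convention $\pi'_{z'}=\pi_{R^\top z'}$ is essential;
\item the sampling mixture must be invariant in law, not pointwise, so the statement holds for the expected objective, or for minibatch losses under the coupling $z'=Rz$.
\end{itemize}
I would handle this by fixing the explicit form of $\mathcal L_\phi,\mathcal L_\psi$ from \citet{bagatella2025td} and checking that every occurrence of a $\psi$-space vector appears either inside a Euclidean norm or inner product, or as an argument to a map whose $R^\top$ pre-composition cancels it.
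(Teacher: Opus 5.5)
Your proposal is correct and is essentially the paper's argument: a direct term-by-term check using that $R$ is an isometry, Frobenius-norm invariance for the orthonormality term, equivariance of $\Pi$, $O(d)$-invariance of the uniform measure on $\mathcal{S}_d$, and the identity $(\Psi R^\top)^\dagger = R\Psi^\dagger$ for inference; your treatment of actor reconditioning, target networks, bootstrapped actions, and the data-dependent part of the task-vector mixture simply spells out what the paper's sketch leaves implicit. One small mis-citation: the $O(d)$-invariance of the uniform measure is a property of the measure itself (e.g., take $\sqrt{d}\,g/\|g\|_2$ with $g \sim \mathcal{N}(0,I)$ and note $Rg \sim \mathcal{N}(0,I)$), not the transitivity of the $O(d)$ action used in Theorem~\ref{thm:inert}(i).
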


\begin{proof}[Proof sketch]
Each claim is a direct computation. For inference: with $\Psi' = \Psi R^\top$, $\hat z' = (\Psi')^\dagger y = R\,\Psi^\dagger y = R\hat z$ (using $(AR^\top)^\dagger = R A^\dagger$ for orthogonal $R$), so $\psi'(s)^\top \hat z' = \psi(s)^\top R^\top R \hat z = \psi(s)^\top \hat z$. The loss terms follow from isometry of $R$ and invariance of the Frobenius norm and of the uniform sphere measure.
\end{proof}

Proposition~\ref{prop:gauge} has two consequences that sharpen the interpretation of the penalty.

\begin{lemma}[Second-moment regularizers do not determine per-example effective dimension]
\label{lem:orth-blind}
There exist encoders with identical second moments $\mathbb{E}[\psi\psi^\top] = I$, hence identical orthonormality loss for every coefficient, whose per-example effective dimensions differ by a factor of order $d$. Concretely, let $k(s)$ be uniform on $\{1, \dots, d\}$ and $\xi(s) \in \{\pm 1\}$ unbiased, and define the axis encoder $\psi^{(1)}(s) = \sqrt{d}\,\xi(s)\, e_{k(s)}$ and, for any fixed $R \in O(d)$, the rotated encoder $\psi^{(2)} = R\,\psi^{(1)}$. Then $\mathbb{E}[\psi^{(1)}\psi^{(1)\top}] = \mathbb{E}[\psi^{(2)}\psi^{(2)\top}] = I$, while $d_{\mathrm{eff}}(\psi^{(1)}(s)) = 1$ identically and $d_{\mathrm{eff}}(\psi^{(2)}(s)) = \|R e_{k(s)}\|_1^2$, which for a uniformly random rotation concentrates at $(2/\pi)\,d\,(1 + o(1))$.
\end{lemma}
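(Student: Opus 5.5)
The plan is to verify the construction directly, in three steps: the second moment of the axis encoder, transfer to the rotated encoder, then the effective dimensions.

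\emph{Second moment of the axis encoder.} Compute $\mathbb{E}[\psi^{(1)}\psi^{(1)\top}] = d\,\mathbb{E}[\xi^2 e_{k}e_{k}^\top]$. Since $\xi^2 = 1$ and $k$ is uniform on $\{1,\dots,d\}$, this equals $d \cdot \frac{1}{d}\sum_k e_k e_k^\top = I$. The sign $\xi$ is not needed for the second moment, but it makes $\mathbb{E}[\psi]=0$, which keeps the example within a centered-covariance reading of the orthonormality loss.

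\emph{Transfer to the rotated encoder.} For $\psi^{(2)} = R\psi^{(1)}$ we have $\mathbb{E}[\psi^{(2)}\psi^{(2)\top}] = R I R^\top = I$. Since the orthonormality loss depends on $\psi$ only through this matrix (Proposition~\ref{prop:gauge}), the two encoders have equal loss for every coefficient. Both encoders also lie on $\mathcal{S}_d$, because $\|\sqrt d\, e_k\|_2 = \sqrt d$ and $R$ is an isometry. Hence $\Pi$ acts as the identity, and the penalized embeddings coincide with the encoders themselves.

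\emph{Effective dimensions.} For the axis encoder, $\psi^{(1)}(s)$ is $1$-sparse, so $d_{\mathrm{eff}}=1$ by Proposition~\ref{prop:identity}(ii). For the rotated encoder, scale invariance gives $d_{\mathrm{eff}}(\psi^{(2)}(s)) = d_{\mathrm{eff}}(Re_{k(s)}) = \|Re_{k(s)}\|_1^2/\|Re_{k(s)}\|_2^2 = \|Re_{k(s)}\|_1^2$, because $\|Re_k\|_2=1$.

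\emph{Concentration for a random rotation (the main step).} If $R$ is Haar on $O(d)$, each column $u = Re_k$ is uniform on the unit sphere $S^{d-1}$. I would write $u = g/\|g\|_2$ with $g \sim \mathcal{N}(0,I_d)$, so that
\[
\|u\|_1^2 = \frac{(\sum_i |g_i|)^2}{\sum_i g_i^2}.
\]
By the law of large numbers with explicit rates (e.g.\ Chebyshev, or Gaussian/Lipschitz concentration), $\sum_i|g_i| = d\sqrt{2/\pi}\,(1+O_P(d^{-1/2}))$ and $\sum_i g_i^2 = d\,(1+O_P(d^{-1/2}))$. Taking the ratio gives $\|u\|_1^2 = \frac{2}{\pi}d\,(1+o(1))$ with high probability. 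For a cleaner statement I would use Lévy concentration on the sphere: the map $u \mapsto \|u\|_1$ is $\sqrt d$-Lipschitz and its mean is $\sqrt{2d/\pi}(1+O(1/d))$, so the fluctuations are $O(1)$ against a mean of order $\sqrt d$.

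To get the statement uniformly over all $d$ columns, I would take a union bound over the $d$ columns; the sub-Gaussian tail $e^{-ct^2}$ absorbs the factor $d$ once $t \sim \sqrt{\log d}$. This step needs the most care, since it is where the precise meaning of ``concentrates'' has to be fixed: per state $s$, or simultaneously over all $k$. It still gives relative error $o(1)$. The other steps are immediate algebra.

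Finally, the ratio of the two effective dimensions is $\frac{2}{\pi}d\,(1+o(1))$, which is of order $d$, as the statement claims.
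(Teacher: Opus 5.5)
Your proposal is correct and follows the paper's proof essentially step for step. Both use the same second-moment computation, the transfer $RIR^\top = I$, the identity $d_{\mathrm{eff}}(\psi^{(2)}(s)) = \|Re_{k(s)}\|_1^2$, and Lipschitz concentration on the sphere around the mean $\sqrt{2d/\pi}\,(1+O(1/d))$. Your explicit $\sqrt d$-Lipschitz constant (giving $O(1)$ fluctuations and relative error $O(d^{-1/2})$), the Gaussian-ratio alternative, and the union bound over the $d$ columns simply make precise what the paper leaves as ``standard concentration'' and ``with high probability.''
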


\begin{proof}
$\mathbb{E}[\psi^{(1)}\psi^{(1)\top}] = d \cdot \frac{1}{d}\sum_k e_k e_k^\top = I$, and second moments are preserved by any fixed rotation: $\mathbb{E}[\psi^{(2)}\psi^{(2)\top}] = R I R^\top = I$. The effective dimensions follow from Definition~\ref{def:effdim} with $\|\psi^{(2)}(s)\|_2 = \sqrt d$ and $\|\psi^{(2)}(s)\|_1 = \sqrt{d}\, \|R e_{k(s)}\|_1$. For Haar-random $R$, the column $Re_k$ is uniform on the unit sphere, whose coordinate marginals satisfy $\mathbb{E}|u_1| = \sqrt{2/(\pi d)}\,(1 + O(1/d))$; hence $\mathbb{E}\|Re_k\|_1 = d\,\mathbb{E}|u_1| = \sqrt{2d/\pi}\,(1 + O(1/d))$, and $\|Re_k\|_1$ concentrates around this value by standard concentration of Lipschitz functions on the sphere, giving $d_{\mathrm{eff}} = \|Re_k\|_1^2 = (2/\pi) d\, (1 + o(1))$ with high probability.
\end{proof}

\begin{remark}[Locating the penalty's action]
\label{rem:locating}
Combining the pieces: (a)~by Proposition~\ref{prop:gauge}, the base objective is exactly invariant under the rotational gauge, so its optima come in $O(d)$-orbits, and every rotation-invariant regularizer is constant on each orbit (Theorem~\ref{thm:inert}); (b)~by Lemma~\ref{lem:orth-blind}, per-example effective dimension varies from $1$ to $\Theta(d)$ \emph{within} a single orbit, so it is a quantity that neither the base losses nor any rotation-invariant addition determines; (c)~by Proposition~\ref{prop:identity}, the $\ell_1$ penalty is an exact monotone function of precisely this quantity. The penalty is thus the minimal coordinate-anisotropic addition that breaks the rotational degeneracy in the direction of concentrated embeddings. At the same time, the inference-equivariance statement in Proposition~\ref{prop:gauge} shows that re-expressing a \emph{fixed} set of learned features in a sparser basis leaves every inferred landscape unchanged; a change in downstream landscapes therefore reflects a change in the learned feature functions themselves, the span entering the approximation term of Proposition~\ref{prop:decomp}, induced by the altered training dynamics of Propositions~\ref{prop:flow} and~\ref{prop:ladder}. This is consistent with, and predicted by, the $N$-independent structure of the empirical gap in Corollary~\ref{cor:vary-n}. Finally, as stated in Section~\ref{sec:method}, per-example concentration does not by itself force different states to share a support; whether the concentrated coordinates align into a shared low-dimensional structure is an empirical property of the trained encoder, which the landscape and correlation analyses of the Experiments section probe directly. We note that the combination of pressures at work here, per-example $\ell_1$ concentration under a global second-moment isotropy constraint, is the classical recipe of sparse dictionary learning \citep{olshausen1996emergence}, which likewise uses it to align a rotationally degenerate linear representation with structure-bearing coordinates.
\end{remark}

\subsection{Alternative Regularizers Under the Sphere Constraint}
\label{app:theory-alternatives}

Table~\ref{tab:regularizer-summary} collects the conclusions of this appendix for the design space of embedding-level regularizers in this architecture. The behaviors fall into four classes: penalties that are provably inert under the sphere constraint; penalties that act only on parameter scale; penalties that favor the dense extreme; and sparsity-promoting penalties, within which $\ell_1$ is the convex member with maximal dynamic range that leaves both the architecture and the closed-form inference procedure of Eq.~\eqref{eq:inference} untouched. Hard sparsification schemes such as top-$k$ selection \citep{makhzani2013k} do enforce concentration, but do so by changing the forward computation at training and test time, replacing the continuous coefficient $\lambda_{\mathrm{sp}}$ with a discrete architectural constant and altering the feature map seen by least-squares inference; the $\ell_1$ penalty achieves graded concentration with no inference-time change, which is the property exploited by the single-line implementation of Algorithm~\ref{alg:sparse-tdjepa}. The utility of activation sparsity for TD-based value learning observed by \citet{liu2019utility} is complementary evidence from the online control setting.

\begin{table}[h]
\centering
\small
\resizebox{\textwidth}{!}{%
\begin{tabular}{llll}
\toprule
Regularizer & Behavior on $\mathcal{S}_{d}$ & Effect on training & Shown in \\
\midrule
$\ell_2$, $\|x\|_2^2$, any $h(\|x\|_2)$ & constant & none (zero gradient) & Thm.~\ref{thm:inert}, Cor.~\ref{cor:instances} \\
isotropically smoothed norms & constant & none (zero gradient) & Cor.~\ref{cor:instances} \\
weight decay (output layer) & function-preserving & parameter scale only & Rem.~\ref{rem:wd} \\
dropout on $\psi$ output & $\approx$ quadratic, $\to$ isotropic & noise; no sparsity pressure & Rem.~\ref{rem:dropout} \\
orthonormality (second moments) & orbit-invariant & blind to per-example $d_{\mathrm{eff}}(x)$ & Lem.~\ref{lem:orth-blind} \\
$\ell_p$, $2 < p \leq \infty$ & minimized at flat $x$ & favors dense embeddings & Prop.~\ref{prop:lp}(iii) \\
$\ell_p$, $1 \leq p < 2$ & minimized at sparse $x$ & favors sparse embeddings & Prop.~\ref{prop:lp}(i) \\
$\ell_p$, $p < 1$; $\ell_0$ & minimized at sparse $x$ & non-convex / no gradient & Rem.~\ref{rem:l0} \\
top-$k$ (hard) & $d_{\mathrm{eff}}(x) \leq k$ enforced & alters architecture \& inference & \S\ref{app:theory-alternatives} \\
$\ell_1$ (Eq.~\eqref{eq:sparse}) & $= \sqrt{d\, d_{\mathrm{eff}}(x)}$ exactly & continuous dial on $d_{\mathrm{eff}}(x)$ & Prop.~\ref{prop:identity} \\
\bottomrule
\end{tabular}%
}
\caption{Behavior of embedding-level regularizers under TD-JEPA's sphere constraint. $d_{\mathrm{eff}}(x)$ denotes the effective dimension of Definition~\ref{def:effdim}.}
\label{tab:regularizer-summary}
\end{table}

\section{Qualitative Rollout Analysis of the Stuck-Policy Failure Mode}
\label{app:rollouts}

RQ2 in the main paper's Experiments section asks whether the sharper, sparsity-induced reward landscape explains the stuck-policy failure mode we observe with the dense TD-JEPA baseline.
Figures~\ref{fig:stuck_ls}--\ref{fig:lastmile_xy} give concrete, single-episode illustrations of what this failure mode looks like in practice. The systematic evidence for the hypothesis itself remains the aggregate correlation results reported in the main experiment, across all seeds and domains.

Each figure shows one evaluation episode for a fully trained model, comparing Sparse TD-JEPA (\emph{Ours}, top row) against the dense TD-JEPA baseline (\emph{bottom row}) on the same task, seeded with the same evaluation seed. 
\emph{Reward Landscape} shows the spatial reward-landscape: colour encodes $z_r^\top\psi(s)$ normalized by each model's own mean predicted reward for far-field states ($>10$\,m from goal), with the goal marked by a star and dotted rings at 1\,m, 3\,m, and 5\,m.
\emph{Rollout} overlays the agent's actual trajectory on the  maze, coloured by its own learned value estimate $V(s) = \psi(s)^\top z_r$ along the path (red = low, green = high).
\emph{Value Convergence vs. Distance to Goal} plots $V(s)$ and the Euclidean distance-to-goal against timestep, together with their ordinary-least-squares trend lines; \emph{Task Success} marks with a check or cross whether the episode reached the goal.

We highlight two recurring, related instances of this failure mode below.

\subsection{Failure Mode 1: Flat Value Signal}

One failure mode we observed empirically is that the dense baseline often never acquires enough signal to leave the vicinity of its start state at all. As an example, Figures~\ref{fig:stuck_ls} and~\ref{fig:stuck_mn} show one full episode each on antmaze-large-stitch and antmaze-medium-navigate. In both cases the sparse agent's distance-to-goal decreases steadily over the episode while $V(s)$ rises correspondingly, reaching the goal well before the episode times out. The dense baseline, by contrast, remains confined to a small region near its starting position for the entire episode: its distance-to-goal trend is essentially flat, and its $V(s)$ trace is noisy with no discernible trend.

\begin{figure*}[h]
\centering
\includegraphics[width=\textwidth]{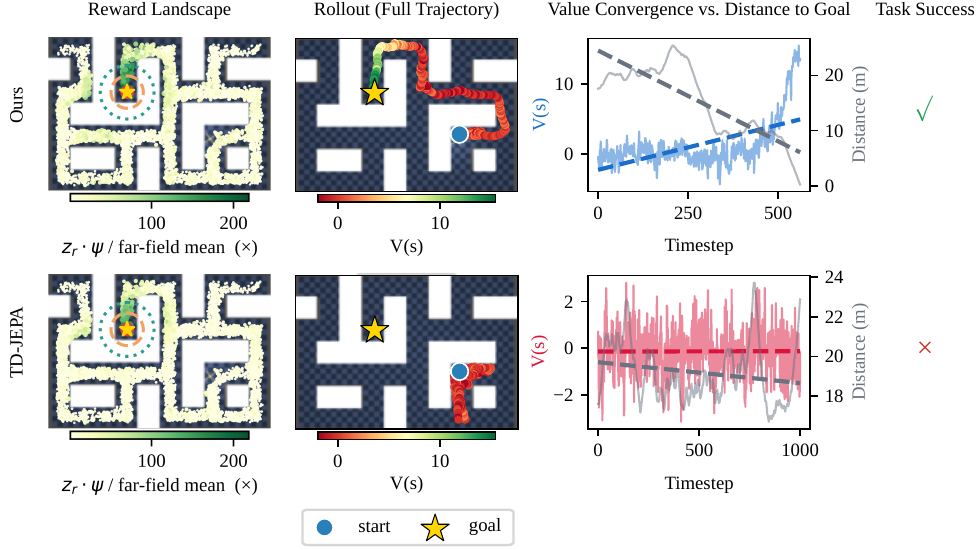}
\caption{Full antmaze-large-stitch episode (task~4). Sparse TD-JEPA navigates from its start position to the goal, with $V(s)$ rising and distance-to-goal falling together over the episode. The dense TD-JEPA baseline instead loops within a small region far from the goal for the entire episode, never making net progress, and fails.}
\label{fig:stuck_ls}
\end{figure*}

\begin{figure*}[h]
\centering
\includegraphics[width=\textwidth]{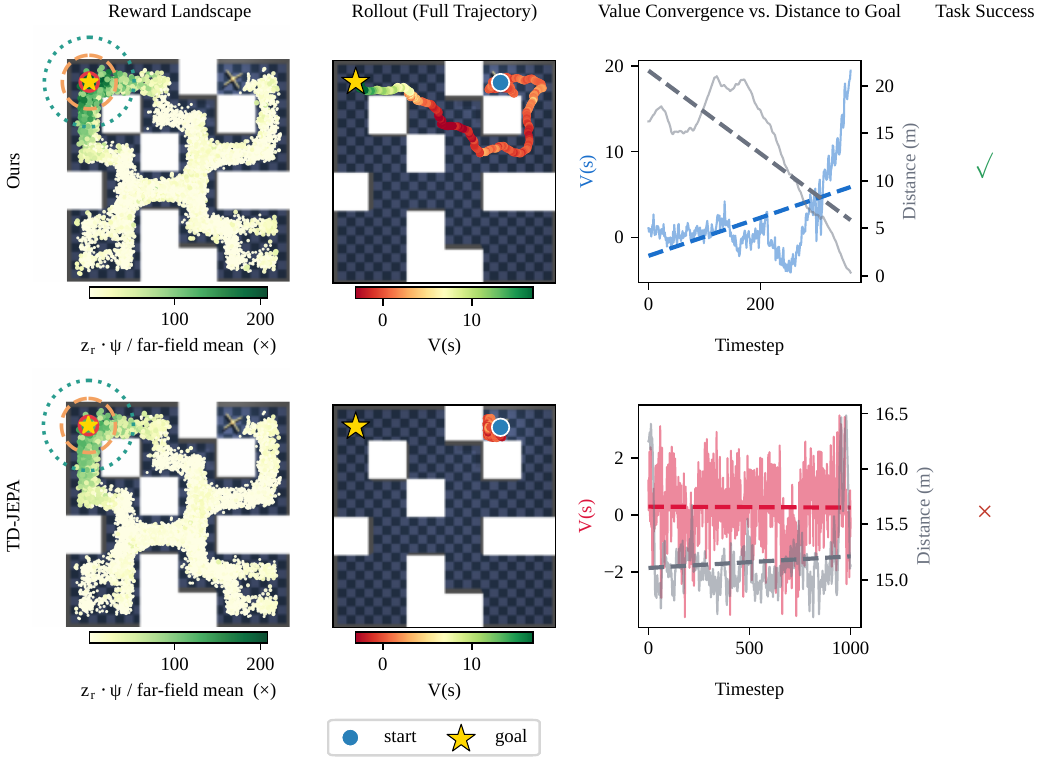}
\caption{Full antmaze-medium-navigate episode (task~4). Sparse TD-JEPA reaches the goal directly from its start position. The dense TD-JEPA baseline remains tangled in a small loop immediately around its start position for the full episode and fails to make progress toward the goal.}
\label{fig:stuck_mn}
\end{figure*}

\subsection{Failure Mode 2: Diffuse Reward Signal Near the Goal}

A related instance of the same underlying issue occurs even when the dense baseline does make it near the goal. The predicted reward stays elevated across most of the goal room rather than peaking sharply at the goal itself, giving the policy little local gradient to follow. As an example, Figure~\ref{fig:lastmile_xy} shows the last five metres of one such episode on the antmaze-large-stitch task. The sparse agent's $V(s)$ rises steadily as distance-to-goal falls, and it reaches the goal. The dense baseline's $V(s)$ instead fluctuates without a clear upward trend even as it moves in and out of the goal room, and it visibly doubles back on its own path near the goal rather than committing to it; the episode times out without success. This is consistent with the diffuse reward landscape reported in the main paper's Experiments section: near the goal, small differences in position are not reflected in $V(s)$ for the dense encoder, so the policy has no local signal telling it exactly where to stop.

\begin{figure*}[h]
\centering
\includegraphics[width=\textwidth]{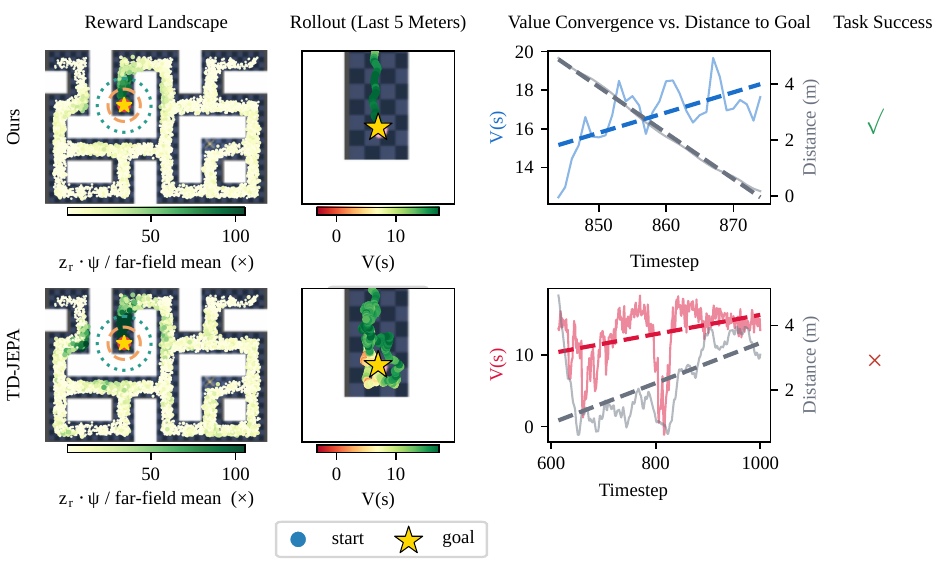}
\caption{Last five metres of an antmaze-large-stitch episode (task~4). \emph{Rollout} zoomed to the 10\,m$\times$10\,m region centred on the goal. Sparse TD-JEPA's value estimate $V(s)$ rises monotonically as it closes the remaining distance and succeeds; the dense TD-JEPA baseline's $V(s)$ is noisy and directionless over the same last-mile window, even dropping sharply to near zero at one point, and the agent wanders near the goal room without ever committing to the goal, ultimately failing the episode.}
\label{fig:lastmile_xy}
\end{figure*}

\section{Seeds and Result Aggregation}
\label{sec:seeds}

Every method reported in the main paper's results table is run with the same fixed seeds. Each seed controls network initialization and training data sampling.
\[
\begin{array}{ccccc}
3917 & 3502 & 8948 & 9460 & 4729 \\
2226 & 1744 & 7742 & 4501 & 6341
\end{array}
\] 
All 10 of these seeds are used for OGBench domains. The first 5 of these seeds are used for DMC domains. 
For each seed, success rate is measured over $10$ evaluation rollouts per task at the final training checkpoint (1M gradient steps for OGBench, 3M for DMC). 
The evaluation environment is seeded independently of the training seed, across all methods, and is set to $0$. This ensures that every model is evaluated against the same sequence of environment resets per task.
All runs use deterministic execution, which enables PyTorch's deterministic algorithms and configures CUDA Deep Neural Network library (cuDNN) to operate in deterministic mode.

\begin{figure*}[h]
\centering
\includegraphics[width=\textwidth]{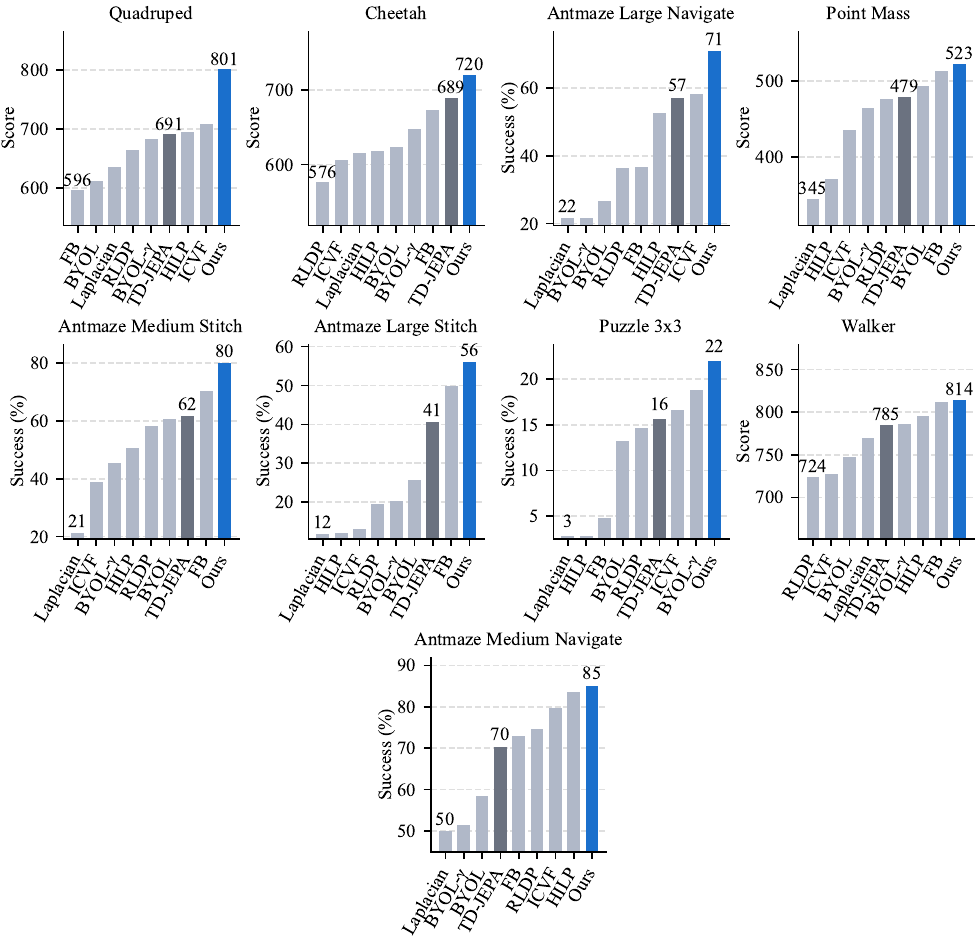}
\caption{Sparse-TD-JEPA achieves better zero-shot performance than the baseline framework in all tested domains, and outperforms all other established zero-shot baselines in most domains.}
\label{barchart}
\end{figure*}

\section{Baselines}
\label{sec:baselines-grids}

This section gives an overview of the tested baselines' architecture and learning hyperparameters. Every method below is swept over the same 10 (OGBench) / 5 (DMC) seeds listed in the \textit{Seeds and Result Aggregation} section. For each seed we keep the grid point with the highest final-checkpoint success rate. OGBench grids are $2\times2=4$ combinations per domain. DMC grids are $2\times3=6$ combinations per domain.

\subsection{Laplacian}
\cite{wu2018laplacian}
\begin{table}[H]
\centering
\small
\begin{tabular}{lll}
\toprule
Domain group & lr$_{\text{features}}$ & ortho.\ coef. \\
\midrule
OGBench Antmaze       & $\{10^{-4},10^{-5}\}$ & $\{0,\,1\}$ \\
OGBench Manipulation  & $\{10^{-4},10^{-5}\}$ & $\{0,\,1\}$ \\
DMC & $\{10^{-4},10^{-5}\}$ & $\{0,\,0.1,\,1\}$ \\
\bottomrule
\end{tabular}
\caption{Laplacian hyperparameter grid.}
\end{table}

\subsection{HILbert foundation Policy (HILP)}
\cite{park2024foundation}
\begin{table}[H]
\centering
\small
\begin{tabular}{lll}
\toprule
Domain group & lr$_{\text{features}}$ & goal-sampling prob.\ \\
\midrule
OGBench Antmaze       & $\{10^{-4},10^{-5}\}$ & $\{0.375,\,0.5\}$ \\
OGBench Manipulation  & $\{10^{-4},10^{-5}\}$ & $\{0.375,\,0.5\}$ \\
DMC & $\{10^{-4},10^{-5}\}$ & $\{0.375,\,0.2,\,0.5\}$ \\
\bottomrule
\end{tabular}
\caption{HILP hyperparameter grid (also used, unmodified, for ICVF).}
\end{table}

\subsection{Intention-Conditioned Value Function (ICVF)}
\cite{ghosh2023reinforcement}
ICVF reuses the same numeric grid as HILP (below); it differs only in fixed (non-swept) settings, namely expectile $=0.9$ (vs.\ $0.5$ for HILP) and a normalised feature encoder.

\subsection{Forward-Backward (FB)}
\cite{touati2021learning}
\begin{table}[H]
\centering
\small
\begin{tabular}{lll}
\toprule
Domain group & lr$_b$ & ortho.\ coefficient \\
\midrule
OGBench Antmaze       & $\{10^{-4},10^{-5}\}$ & $\{100,\,1000\}$ \\
OGBench Manipulation  & $\{10^{-4},10^{-5}\}$ & $\{100,\,1000\}$ \\
DMC & $\{10^{-4},10^{-5}\}$ & $\{0.1,\,1,\,10\}$ \\
\bottomrule
\end{tabular}
\caption{FB hyperparameter grid.}
\end{table}

\subsection{Regularized Latent Dynamics Prediction (RLDP)}
\cite{jajoo2026regularized}
\begin{table}[H]
\centering
\small
\begin{tabular}{lll}
\toprule
Domain group & lr$_b$ & ortho.\ coefficient \\
\midrule
OGBench Antmaze       & $\{10^{-4},10^{-5}\}$ & $\{0.001,\,0.01\}$ \\
OGBench Manipulation  & $\{10^{-4},10^{-5}\}$ & $\{0.01,\,0.1\}$ \\
DMC & $\{10^{-4},10^{-5}\}$ & $\{0.001,\,0.01,\,0.1\}$ \\
\bottomrule
\end{tabular}
\caption{RLDP hyperparameter grid.}
\end{table}

\subsection{Bootstrap Your Own Latent (BYOL)}
\cite{grill2020bootstrap}
\begin{table}[H]
\centering
\small
\begin{tabular}{lll}
\toprule
Domain group & lr$_{\text{features}}$ & ortho.\ coefficient \\
\midrule
OGBench Antmaze       & $\{10^{-4},10^{-5}\}$ & $\{0.001,\,0.01\}$ \\
OGBench Manipulation  & $\{10^{-4},10^{-5}\}$ & $\{0.01,\,0.1\}$ \\
DMC & $\{10^{-4},10^{-5}\}$ & $\{0.001,\,0.01,\,0.1\}$ \\
\bottomrule
\end{tabular}
\caption{BYOL hyperparameter grid (also used, unmodified, for BYOL-$\gamma$).}
\end{table}

\subsection{ BYOL-$\gamma$}
\cite{lawson2025self}
BYOL-$\gamma$ reuses the same numeric grid as BYOL (above); it differs only in fixed (non-swept) settings, namely multi-step targets enabled (\texttt{multi\_step=True}) and a discounted future-goal sampling horizon (\texttt{data.future}$=0.99$ OGBench / $0.98$ DMC).

\subsection{Temporal Difference Joint-Embedding Predictive Architecture (TD-JEPA)}
\cite{bagatella2025td}
\begin{table}[H]
\centering
\small
\begin{tabular}{lll}
\toprule
Domain group & lr$_\psi$ & $\psi_{\text{ortho}}$-coefficient \\
\midrule
OGBench Antmaze       & $\{10^{-4},10^{-5}\}$ & $\{0.1,\,1\}$ \\
OGBench Manipulation  & $\{10^{-4},10^{-5}\}$ & $\{1,\,10\}$ \\
DMC & $\{10^{-4},10^{-5}\}$ & $\{0.01,\,0.1,\,1\}$ \\
\bottomrule
\end{tabular}
\caption{TD-JEPA hyperparameter grid.}
\end{table}

\section{Dataset and Evaluation Environment Details}
\label{app:datasets}

We evaluate our method on 65 tasks across the standard zero-shot reinforcement learning benchmarking frameworks, the \textit{DeepMind Control Suite} (DMC) \cite{tassa2018deepmind,tassa2020dmcontrol} and the \textit{Offline Goal-Conditioned RL Benchmark} (OGBench) \cite{park2025ogbench}. Task descriptions and dimensions for DMC are listed in detail in Table~\ref{tab:dmc-specs}.  Task descriptions and dimensions for OGBench are listed in detail in Table~\ref{tab:ogbench-specs}.

\begin{table*}[h]
\centering
\small
\caption{DMC dataset domains, dimensions, and tasks.}
\label{tab:dmc-specs}
\resizebox{\textwidth}{!}{%
\begin{tabular}{lccll}
\toprule
Domain & State dim. & Action dim. & Tasks & Task Description \\
\midrule
\multirow{4}{*}{Walker}    & \multirow{4}{*}{24} & \multirow{4}{*}{6}  & Stand & Upright torso, minimal height \\
 & & & Walk & Stand reward plus forward velocity \\
 & & & Run & Stand reward plus higher velocity \\
 & & & Spin & Continuous rotation while balanced \\
\midrule
\multirow{4}{*}{Cheetah}   & \multirow{4}{*}{17} & \multirow{4}{*}{6}  & Walk & Reward linear in forward velocity, max 10 m/s \\
 & & & Walk Backward & Reward linear in backward velocity \\
 & & & Run & Reward linear in forward velocity, max 10 m/s \\
 & & & Run Backward & Backward velocity, higher target speed \\
\midrule
\multirow{8}{*}{Pointmass} & \multirow{8}{*}{4}  & \multirow{8}{*}{2}  & Reach Top Left & Navigate to top-left target \\
 & & & Reach Top Right & Navigate to top-right target \\
 & & & Reach Bottom Left & Navigate to bottom-left target \\
 & & & Reach Bottom Right & Navigate to bottom-right target \\
 & & & Reach Bottom Left Long & Navigate to distant bottom-left target \\
 & & & Loop & Trace a looped trajectory \\
 & & & Square & Trace a square trajectory \\
 & & & Fast Slow & Alternate fast and slow movement segments \\
\midrule
\multirow{4}{*}{Quadruped} & \multirow{4}{*}{78} & \multirow{4}{*}{12} & Stand & Maintain balanced standing posture \\
 & & & Walk & Locomote at moderate speed \\
 & & & Run & Locomote at higher speed \\
 & & & Jump & Jump in place \\
\bottomrule
\end{tabular}%
}
\end{table*}

\begin{table*}[h]
\centering
\small
\caption{OGBench dataset domains, dimensions, and tasks.}
\label{tab:ogbench-specs}
\resizebox{\textwidth}{!}{%
\begin{tabular}{lccll}
\toprule
Domain & State dim & Action dim & Task & Task Description \\
\midrule
Antmaze Medium Navigate & 29 & 8 & Task1--5 & Navigate to five goals in the medium maze \\
\midrule
Antmaze Large Navigate   & 29 & 8 & Task1--5 & Navigate to five goals in the large maze \\
\midrule
Antmaze Medium Stitch   & 29 & 8 & Task1--5 & Reach five goals by stitching sub-trajectories \\
\midrule
Antmaze Large Stitch    & 29 & 8 & Task1--5 & Reach five goals by stitching sub-trajectories \\
\midrule
Antmaze Medium Explore  & 29 & 8 & Task1--5 & \shortstack[l]{Reach five goals from low-quality,\\high-coverage exploratory data} \\

\midrule
\multirow{5}{*}{Cube Single}            & \multirow{5}{*}{28} & \multirow{5}{*}{5} & Horizontal & Pick-and-place, horizontal offset \\
 & & & Vertical1 & Pick-and-place, vertical offset \\
 & & & Vertical2 & Pick-and-place, vertical offset \\
 & & & Diagonal1 & Pick-and-place, diagonal offset \\
 & & & Diagonal2 & Pick-and-place, diagonal offset \\
\midrule
\multirow{5}{*}{Cube Double}             & \multirow{5}{*}{37} & \multirow{5}{*}{5} & Single Pick-and-Place & Move one cube \\
 & & & Double Pick-and-Place 1 & Move both cubes \\
 & & & Double Pick-and-Place 2 & Move both cubes \\
 & & & Swap & Exchange cube positions \\
 & & & Stack & Stack one cube on the other \\
\midrule
\multirow{5}{*}{Scene}                   & \multirow{5}{*}{40} & \multirow{5}{*}{5} & Open & Open drawer or window \\
 & & & Unlock-and-Lock & Toggle button locks \\
 & & & Rearrange Medium & Reconfigure objects \\
 & & & Put in Drawer & Unlock, open, place, close \\
 & & & Rearrange Hard & Reconfigure, up to 8 steps \\
\midrule
Puzzle 3x3              & 55 & 5 & Task1--5 & Press buttons to reach five color targets \\
\bottomrule
\end{tabular}%
}
\end{table*}

\end{document}